\documentclass[openany, amssymb, psamsfonts]{amsart}
\usepackage{mathrsfs,comment}
\usepackage{amssymb}
\usepackage[usenames,dvipsnames]{color}
\usepackage[normalem]{ulem}
\usepackage{url}
\usepackage[all,arc,2cell]{xy}
\UseAllTwocells
\usepackage{enumerate}
\usepackage{hyperref}  
\usepackage{pgfplots}
\usepackage{tikz}
\pgfplotsset{compat=1.17}
\hypersetup{%
  bookmarksnumbered=true,%
  bookmarks=true,%
  colorlinks=true,%
  linkcolor=blue,%
  citecolor=blue,%
  filecolor=blue,%
  menucolor=blue,%
  urlcolor=blue,%
  pdfnewwindow=true,%
  pdfstartview=FitBH}   
\usepackage{listings}
\usepackage{xcolor}
\usepackage{float}
\newtheorem{thm}{Theorem}[section]
\newtheorem{cor}{Corollary}[section]
\newtheorem{prop}{Proposition}[section]
\newtheorem{lem}{Lemma}[section]

\theoremstyle{definition}
\newtheorem{defn}{Definition}[section]

\newtheorem{rem}{Remark}[section]

\makeatletter
\let\c@obs=\c@thm
\let\c@cor=\c@thm
\let\c@prop=\c@thm
\let\c@lem=\c@thm
\let\c@prob=\c@thm
\let\c@con=\c@thm
\let\c@conj=\c@thm
\let\c@defn=\c@thm
\let\c@notn=\c@thm
\let\c@notns=\c@thm
\let\c@exmp=\c@thm
\let\c@ax=\c@thm
\let\c@pro=\c@thm
\let\c@ass=\c@thm
\let\c@warn=\c@thm
\let\c@rem=\c@thm
\let\c@sch=\c@thm
\let\c@equation\c@thm
\numberwithin{equation}{section}
\makeatother

\title{A Viscosity Semigroup Framework for Stable Image Reconstruction}
\author{Arina Oberoi}

\begin{document}

\maketitle

\begin{abstract}
Starting from the axiomatic formulation of scale-space theory, we develop a viscosity-solution framework for multiscale image representations arising from degenerate elliptic-parabolic partial differential equations. Rather than introducing a new semigroup theory, we work within the standard viscosity-solution setting, using comparison principles to obtain well-posedness, uniqueness, and contraction in the supremum norm. This perspective is used to motivate a hybrid reconstruction operator in which a learned inverse map is followed by a nonlinear diffusion evolution. At the continuous level, the diffusion operator satisfies non-expansiveness, which provides stability for the reconstruction process; this framework is then evaluated on a CT-based mesothelioma classification task, where it attains an AUC of \(0.875\) with negligible variation across epochs, while the baseline model acquires AUC values from \(0.49\) to \(0.80\) without a clear convergence pattern. These observations are consistent with the stabilizing role suggested by the discussed viscosity theory.
\end{abstract}

\tableofcontents

\section{Introduction}

Modern medical imaging pipelines rely on accurate signal reconstruction from noisy, incomplete, or corrupted measurements. In applications such as CT-based tumor detection, the reconstructed image is not just an intermediate output, but the object on which all downstream clinical decisions depend. As a result, it is natural to require that the reconstruction process be stable, in the sense that small perturbations in the input data do not produce large deviations in the output that are diagnostically misleading.

Contemporary approaches to image reconstruction often rely on learned operators, typically implemented as convolutional encoder-decoder architectures, acting as learned inverse maps \(u_{\text{out}} = G_\theta(u_{\text{in}})\) which transform corrupted measurements into usable representations for downstream tasks such as segmentation or classification. While these methods achieve strong empirical performance, they lack a key feature emphasized in the PDE and scale-space literature: provable stability of the reconstruction process. Purely learned mappings, while often effective in practice, do not in general provide structural guarantees on how perturbations in the input propagate through the system. Classical nonlinear diffusion models address this limitation by constructing image evolutions that satisfy strong structural properties, including order preservation, uniqueness, and contraction in the supremum norm. In medical imaging, such guarantees provide a rigorous notion of robustness: the evolution cannot amplify differences between inputs.

The central objective of this thesis is to formulate and study a hybrid reconstruction framework that combines the expressivity of learned models with the stability perspective provided by nonlinear diffusion theory. Rather than replacing the neural reconstructor, the network output is interpreted as an intermediate representation that serves as the initial condition for a controlled scale-space evolution governed, at the continuous level, by a degenerate elliptic parabolic equation in the viscosity sense. The resulting operator incorporates a diffusion stage which, at the level of the continuous model, satisfies a contraction property in the supremum norm which motivates (but does not by itself prove) its use as a stabilizing component in the reconstruction pipeline.

To motivate the need for stability, this work's appendix studies how representations evolve in neural networks, viewing them as pipelines that transform noisy inputs into task-relevant structures. Empirical analyses (presented largely in the appendix) show that while networks effectively reorganize information, they provide no guarantees on how perturbations propagate through intermediate representations. This analysis serves two purposes. First, it provides a baseline characterization of representation dynamics in both feedforward and sequential models. Second, it highlights a key limitation: while neural networks learn how to organize information, they do not provide guarantees on how perturbations propagate through intermediate representations.

The discussion then returns to the central application of CT-based mesothelioma classification to evaluate whether incorporating a nonlinear diffusion step improves robustness. In this setting, the diffusion process acts as a mathematically structured denoising mechanism, suppressing nuisance variability while ensuring that the relevant features are preserved.

\section{Viscosity Solutions and Nonlinear Semigroups}

In classical image processing and PDE theory, multiscale representations are modeled as nonlinear evolution processes satisfying strong stability and invariance properties. These properties lead naturally to the formulation of scale-space representations as semigroups generated by degenerate elliptic parabolic equations. This framework sets the mathematical foundation for the hybrid reconstruction operator studied here: a learned inverse map followed by a controlled nonlinear diffusion evolution. The results reviewed in this section establish stability, uniqueness, and contraction properties for the continuous PDE evolution, which are later used as a conceptual framework for discussing robustness in the neural-network reconstruction pipeline.

\subsection{Scale-Space Representations and Semigroup Structure}

Let bounded and uniformly continuous \(u_0 : \Omega \subset \mathbb{R}^2 \to \mathbb{R}\) be the original image.

\begin{defn}[Scale Space]

Let \(X\) be a function space over \(\Omega\) (for example \(X = L^2(\Omega)\) or \(C^\infty(\Omega)\)), and let \(u_0 \in X\). A \emph{scale space representation} of $u_0$ is a one-parameter family
\[
u : [0,\infty) \times \Omega \to \mathbb{R}
\]
such that for each \(t \ge 0\), the slice \(u(t,\cdot)\) belongs to \(X\).
\end{defn}

Define the operator family \(\{T_t\}_{t \ge 0}\) on \(X\) by \(T_t : X \to X, T_t u_0 := u(t,\cdot)\).  The resulting family of operators \(\{T_t\}_{t\ge0}\) satisfies the standard properties of a nonlinear evolution semigroup arising in
scale-space theory, including the identity property at \(t=0\), the semigroup property, and stability in the supremum norm, as described in the scale-space frameworks of van den Boomgaard and
Alvarez, Guichard, Lions, and Morel (AGLM).

\begin{enumerate}
    \item \textbf{Initial condition}

We require \(t=0\) such that
\[
u(0,x) = u_0(x) \text{for all } x \in \Omega
\]
which in operator notation is
\[
T_0 = \mathrm{Id}_X
\]
i.e. \(T_0 u_0 = u_0\) for all \(u_0 \in X\). 

\item \textbf{Causality (order preservation)}
If $u_0 \le v_0$, then
    \[
    T_t u_0 \le T_t v_0
     \text{for all } t \ge 0
    \]
This property is related to, but different from, the viscosity comparison principle introduced later.

\item \textbf{Recursivity (semigroup property)}

\[
T_s T_t = T_{s+t},  \forall s,t \ge 0
\]

This requires 
\[
T_s(T_t u_0) = T_{s+t} u_0, \forall u_0 \in X,\ \forall s,t \ge 0
\]
such that, for the scale space \(u(t,x)\),
we have \(u(s+t,\cdot) = T_{s+t} u_0 = T_s\big(T_t u_0\big),\). This means evolving from scale \(0\) to \(t\) and then from \(t\) to \(s+t\) is the same as evolving directly from \(0\) to \(s+t\).

\begin{proof}
Assume the evolution equation generates a unique solution for each initial condition (uniqueness is established more rigorously in Corollary 4.8 below via the comparison principle, used here to show the semigroup structure). Fix $t\ge 0$. Define shifted function
\[
v(s,x):=u(t+s,x),  s\ge 0,\ x\in\Omega
\]

By the chain rule,
\[
\partial_s v(s,x) = \partial_t u(t+s,x)
\]
Since $u$ satisfies the evolution equation, $v$ satisfies the same PDE with initial condition $v(0,\cdot) = T_t u_0$. By uniqueness of solutions
\[
v(s,\cdot) = T_s(T_t u_0)
\]
On the other hand, $v(s,\cdot) = u(t+s,\cdot) = T_{t+s} u_0$, hence
\[
T_s T_t = T_{s+t}
\]
\end{proof}

\end{enumerate}

\subsection{Existence of a PDE Generator and Second-Order Structure}

Under the scale-space axioms of Alvarez-Guichard-Lions-Morel, together with regularity, locality, and invariance assumptions, a multiscale representation can be characterized by a (possibly nonlinear) second-order evolution equation of the form
\[
\partial_t u(t,x)
+
F\big(x,u(t,x),\nabla u(t,x),\nabla^2 u(t,x)\big)
= 0, (t,x)\in (0,\infty)\times\Omega
\]

In this paper, the operator family \(\{T_t\}_{t\ge 0}\) is \emph{generated} by the equation if, for each initial condition \(u_0\in X\), the function \(u(t,x):=T_tu_0(x)\) is the viscosity solution of the PDE with initial condition \(u(0,x)=u_0(x)\). When uniqueness holds, this identifies \(\{T_t\}_{t\ge0}\) with the solution operator of the evolution equation. The operator \(F\) is then required to satisfy the following structural properties imposed by the AGLM framework:

\begin{enumerate}
\item \textbf{Degenerate ellipticity (ex: heat equation)}
For symmetric matrices \(X,Y\),
\[
X \le Y  \Rightarrow  F(x,r,p,X) \ge F(x,r,p,Y)
\]

Here \(Y-X\) is positive semidefinite, so \(X \le Y\) means that \(Y\) has more curvature than \(X\) (the function with Hessian \(Y\) is more convex). Increasing curvature increases \(\operatorname{tr}(X)\). For example, in the heat equation, the operator can be written as
\[
\partial_t u = \Delta u
\]

\[
\partial_t u - \Delta u = 0
\]

Comparing with AGLM (\(\partial_t u + F(x,u,\nabla u,\nabla^2 u) = 0\)), this shows
\[
F = -\Delta u = -\operatorname{tr}(X)
\]
This shows that if curvature increases, \(\operatorname{tr}(X)\) increases and therefore \(-\operatorname{tr}(X)\) decreases. Increasing curvature cannot increase the operator \(F\); it can only decrease it (or leave it unchanged). Consequently, the operator is degenerate elliptic, which guarantees the comparison principles required to prove uniqueness of viscosity solutions, as discussed by Evans and proved more rigorously later in this paper.

\item \textbf{Contrast/greyscale invariance}
For any strictly increasing function $\phi : \mathbb{R} \to \mathbb{R}$,
\[
T_t(\phi \circ u_0) = \phi \circ T_t(u_0)
\]
The evolution commutes with monotone reparametrizations of intensity: relabeling greyscale values before evolving gives the same result as evolving first and relabeling after. 

\item \textbf{Locality}

The operator is local such that \((\mathcal{F}u)(x) = F(x,u(x),\nabla u(x),\nabla^2 u(x))\) depends on the behavior of \(u\) only in a small neighborhood of \(x\).

\item \textbf{Translation invariance}

\[
F(x,r,p,X) = F(r,p,X)
\]

The operator is invariant under spatial translation such that shifting the spatial variable \(x\) by any vector doesn't change the form of \(F\). The same evolution rule applies everywhere in space.

\item \textbf{Rotation invariance (isotropy)}

For any orthogonal matrix \(Q\),
\[
F(r, Qp, QXQ^T) = F(r,p,X)
\]

The operator is \emph{isotropic} if rotating the coordinate system, therefore rotating the gradient \(p\) and Hessian \(X\), leaves \(F\) unchanged; the evolution governed by the PDE has no preferred spatial direction and treats all orientations equally. This definition of rotation invariance of operators in image processing and scale-space formulations is further discussed by Luengo Hendriks and van Vliet.
\end{enumerate}

\section{Breakdown of Classical and Weak Solutions}

Considering the nonlinear evolution PDE 
\[
\partial_t u + F(x,u,\nabla u,\nabla^2 u) = 0
\]
which comes up often in image processing and scale-space theory, classical solutions may be inadequate since a classical solution requires 
\[
u \in C^{1,2}((0,\infty)\times\Omega)
\]
such that all derivatives appearing in the PDE exist and the equation holds pointwise at every $(t,x)$. This corresponds to the usual classical idea of solving by computing all derivatives, plugging them into the equation, and requiring the equality to hold at every point.

However, even for very smooth initial data, under the nonlinear PDE considered here, solutions may develop nonsmooth features since nonlinear diffusion preserves edges, corresponding to jumps or sharp transitions in the gradient of $u$ and, in turn, aren't differentiable in the classical sense. So, though the equation is physically and mathematically meaningful, it cannot be treated using classical calculus.

The usual notion of weak solution is also not appropriate since the standard formulation relies on writing the PDE in divergence form and then integrating by parts to move derivatives from $u$ onto a smooth test function. This requires $u$ to have derivatives in the Sobolev sense (e.g.\ $u \in L^2$ and $\nabla u \in L^2$ locally), since derivatives of $u$ appear only under integrals and are paired with $\varphi$ or $\nabla \varphi$. This relies on the divergence structure $\nabla\cdot(\cdots)$ as integration by parts converts (distributional) second-order derivatives of $u$ into first-order derivatives of $u$ and derivatives of the test function $\varphi$ under the integral sign. The key step is the integration-by-parts identity, as discussed in Brezis:

\[
\int_{\Omega} \big(-\nabla \cdot A(x)\big)\,\varphi(x)\,dx
=
\int_{\Omega} A(x)\cdot\nabla\varphi(x)\,dx,
 \forall \varphi \in C_c^\infty(\Omega)
\]

This lowers the order of derivatives acting on the unknown function $u$ such that second-order derivatives appearing in divergence form are transferred onto the smooth test function $\varphi$. As such, this also produces a variational (integral) identity that is well defined under Sobolev regularity assumptions. This allows the PDE to be interpreted in the framework of Sobolev spaces, since derivatives of $u$ appear only in weak form, paired with $\varphi$ or $\nabla \varphi$ inside integrals. 

\begin{lem}[{Aubert-Kornprobst}]
\begin{enumerate}
    \item[(i)] A fully nonlinear equation may admit a natural (continuous) solution that is not $C^2$.
    \item[(ii)] The usual weak (variational) formulation is not naturally available, since the operator is \emph{not} in divergence form. 
\end{enumerate}
\end{lem}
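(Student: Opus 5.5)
Both parts are existence statements ("may admit", "not naturally available"), so I will prove each by exhibiting an explicit equation from the scale-space class above.

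\textbf{Part (i).} I will use the mean curvature (level-set) equation, which satisfies contrast invariance, isotropy and degenerate ellipticity:
\[
\partial_t u = |\nabla u|\,\operatorname{div}\!\Big(\frac{\nabla u}{|\nabla u|}\Big) = \Delta u - \frac{\langle \nabla^2 u\,\nabla u,\nabla u\rangle}{|\nabla u|^2},
\]
that is, \(F(p,X) = -\operatorname{tr}\big((I - \tfrac{p\otimes p}{|p|^2})X\big)\).

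First I check degenerate ellipticity. The matrix \(I - \hat p\otimes\hat p\) is positive semidefinite, so \(X\le Y\) gives \(F(p,X)\ge F(p,Y)\). This places the equation inside the framework of the previous subsection.

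Next I exhibit a non-\(C^2\) solution. The cleanest choice is the stationary cone \(u(x)=|x|\) in \(\mathbb{R}^2\). Its level sets are circles, and each circle shrinks at speed equal to its curvature. I will therefore use the explicit solution \(u(t,x) = \sqrt{(|x|^2 + 2t)_+}\) (for a circle of radius \(r\), \(\dot r = -1/r\), so \(r^2 = c - 2t\)), or a variant of it. Two facts need checking: away from the singular set it solves the equation classically, and at points where \(\nabla u\) is discontinuous it is not \(C^2\) (the graph has a cone corner).

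A simpler route is contrast invariance. Take a smooth solution \(w\) and any strictly increasing \(\phi\) that is continuous but not \(C^1\), for instance \(\phi(s)=s+|s|\,s^{0}\) adjusted to stay strictly increasing, such as \(\phi(s)=s\) for \(s\le0\) and \(\phi(s)=2s\) for \(s>0\). By the invariance axiom \(T_t(\phi\circ u_0)=\phi\circ T_t u_0\), the function \(\phi\circ w\) is the natural solution, and it has a gradient jump across \(\{w=0\}\). I will present this second route and verify it directly in the viscosity sense: test functions touching from above or below at a kink point are handled because \(\phi\) is a monotone reparametrization of level sets.

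\textbf{Part (ii).} I will show that \(F\) above cannot be written as \(-\operatorname{div} A(x,u,\nabla u)\) for any smooth flux \(A\).

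Suppose \(\sum_{ij} a_{ij}(p)u_{ij} = \operatorname{div}A(\nabla u)\). Then necessarily \(a_{ij} = \partial_{p_j}A_i\). Differentiating, the compatibility condition \(\partial_{p_k}a_{ij} = \partial_{p_j}a_{ik}\) must hold.

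I then compute this for \(a_{ij}(p) = \delta_{ij} - p_ip_j/|p|^2\) and find a violation. For example, \(\partial_{p_2}a_{11}\) at \(p=(1,1)\) differs from \(\partial_{p_1}a_{12}\) there.

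More generally, the equation could be multiplied by an integrating factor \(g(p)\), giving \(|\nabla u|^{-1}\) times the mean curvature operator, which is a divergence. I will remark on this: such rescaling destroys the parabolic form \(\partial_t u+F=0\), and it is singular where \(\nabla u=0\). The integration-by-parts identity therefore cannot be applied to the evolution with \(u\) only in \(H^1\).

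\textbf{Main obstacle.} I expect the hardest step to be making ``not naturally available'' precise, since it is a qualitative claim. I will settle it by the compatibility-condition computation, stated for the operator with the time derivative kept fixed.
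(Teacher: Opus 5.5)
Your argument for (ii) fails at ``then necessarily $a_{ij}=\partial_{p_j}A_i$''. Because $\nabla^2u$ is symmetric, the identity $\sum_{ij}a_{ij}(\nabla u)\,u_{x_ix_j}=\operatorname{div}A(\nabla u)$ determines only the symmetric part, $\tfrac12(\partial_{p_j}A_i+\partial_{p_i}A_j)=a_{ij}$. The correct integrability condition is therefore the Saint-Venant compatibility condition, not $\partial_{p_k}a_{ij}=\partial_{p_j}a_{ik}$, so the mismatch you find at $p=(1,1)$ proves nothing.

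In fact, for the two-dimensional curvature operator your conclusion is false locally. Write $p^{\perp}=(-p_2,p_1)$ and $A(p)=\arg(p)\,p^{\perp}$. Then $\partial_{p_j}A_i=\hat p^{\perp}_i\hat p^{\perp}_j+\arg(p)\,J_{ij}$ with $J$ antisymmetric, hence
\[
\operatorname{div}\big(\arg(\nabla u)\,(\nabla u)^{\perp}\big)=\frac{u_{x_2}^2u_{x_1x_1}-2u_{x_1}u_{x_2}u_{x_1x_2}+u_{x_1}^2u_{x_2x_2}}{|\nabla u|^2}=|\nabla u|\operatorname{div}\Big(\frac{\nabla u}{|\nabla u|}\Big)
\]
wherever a smooth branch of $\arg\nabla u$ exists, e.g.\ near $\nabla u=(1,1)$. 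Geometrically, the curvature of a level line is the tangential derivative of the normal angle.

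Only a global obstruction survives. Two fluxes with the same symmetric Jacobian on a connected set differ by $Wp+b$ with $W$ constant antisymmetric. The $2\pi p^{\perp}$ jump of $\arg(p)\,p^{\perp}$ across the branch cut cannot be cancelled that way, so no single-valued flux exists on $\{p\neq0\}$.

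To close (ii) you have three options:
\begin{enumerate}
\item[(a)] Make this monodromy argument.
\item[(b)] Pass to $n\ge3$, where the Saint-Venant condition genuinely fails for $I-\hat p\otimes\hat p$ (e.g.\ at $p=e_n$).
\item[(c)] Use the paper's first-order operator $|\nabla u|$. There a vanishing second-order part means $\operatorname{sym}D_pA\equiv0$, which forces $A=W(x,u)p+b(x,u)$ with $W$ antisymmetric. So $\operatorname{div}A(x,u,\nabla u)$ is affine in $\nabla u$ and cannot equal $|\nabla u|$. This is the precise form of the paper's remark that a divergence ``necessarily contains second derivatives''.
\end{enumerate}

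Part (i) works along your second route. It is a genuinely different example from the paper's, which uses the stationary first-order problem $|u'|=1$ on $(-1,1)$ with the corner solution $1-|x|$.

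Drop your first route. The cone $|x|$ is not stationary under curvature motion, and $\sqrt{|x|^2+2t}$ (the positive part is superfluous) is $C^\infty$ for every $t>0$. So it exhibits no singularity beyond the initial datum.

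For the contrast-invariance route, fix a concrete $w$ with $\nabla w\neq0$ on $\{w=0\}$, say $w=x_1$. Then $u=\phi(x_1)$ has a convex kink on $\{x_1=0\}$, and no $C^2$ function touches it from above there. Any $\varphi$ touching from below has $\partial_t\varphi=\partial_{x_2}\varphi=0$, $\partial_{x_1}\varphi\in[1,2]$ and $\partial_{x_2x_2}\varphi\le0$. The supersolution inequality then reads $-\partial_{x_2x_2}\varphi\ge0$, which holds.

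The curvature operator is discontinuous at $p=0$, so this direct check, in which test gradients stay away from $0$, is preferable to invoking the invariance axiom or the paper's continuous-$F$ framework. Your example has the merit of lying inside the paper's own scale-space class; the paper's has the merit of a one-line verification.
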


\begin{proof}
Consider the fully nonlinear equation
\[
|\nabla u| = 1  \text{in } \Omega := (-1,1) \subset \mathbb{R}
\]
with boundary condition
\[
u(-1) = u(1) = 0
\]
In one dimension, this reads
\[
|u'(x)| = 1  \forall x \in (-1,1)
\]

Let \(u(x) := 1 - |x|,  x \in [-1,1]\).This function is chosen since a good candidate here would be one that satisfies the boundary conditions while having a nonsmooth feature. With the chosen function, $u$ is Lipschitz and piecewise $C^\infty$, but \(\exists\) a corner at $x = 0$.

\[
u'(x) =
\begin{cases}
1, & x < 0 \\[4pt]
-1, & x > 0
\end{cases}
\]

So, \(|u'(x)| = 1  \forall x \neq 0\) which tells us that the equation holds pointwise on $(-1,1) \setminus \{0\}$ and, since
\[
u(-1) = 1 - |-1| = 0
\quad \text{and} \quad
u(1) = 1 - |1| = 0
\]
the boundary condition is also satisfied. However, $u$ is not differentiable at $x = 0$, so the solution here cannot be a classical solution. This proves (i). Also, note that operator $u \mapsto |\nabla u|$ isn't in divergence form, as it cannot be written as
\[
-\nabla \cdot A(x,u,\nabla u)
\]
for some vector field $A$. To show this, assume for contradiction that there exists a vector field \(A(x,u,\nabla u) = (A_1,\dots,A_n)\) such that \(|\nabla u| = -\nabla \cdot A(x,u,\nabla u)\) 

\[
\nabla \cdot A(x,u,\nabla u)
= \sum_{i=1}^n \frac{\partial}{\partial x_i} A_i(x,u,\nabla u)
\]

\[
\frac{\partial}{\partial x_i} A_i(x,u,\nabla u)
=
\partial_{x_i}A_i
+
\partial_u A_i\, u_{x_i}
+
\sum_{j=1}^n \partial_{p_j}A_i\, u_{x_i x_j}
\]

\[
\nabla \cdot A(x,u,\nabla u)
=
\sum_{i,j=1}^n \partial_{p_j}A_i \, u_{x_i x_j}
+ \text{(lower-order terms)}
\]

The above expression necessarily contains second derivatives of $u$ (via Hessian $\nabla^2 u$). However, the operator
\[
F(\nabla u) = |\nabla u|
\]
depends only on first derivatives of $u$ and no second derivatives. So, if
\[
|\nabla u| = -\,\nabla \cdot A(x,u,\nabla u)
\]
Then the right-hand side would involve $\nabla^2 u$, while the left-hand side would not, showing that no such vector field $A$ can exist. Therefore, since the operator cannot be written in divergence form, this shows the standard weak formulation is not available.
\end{proof}

\subsection{Viscosity Solutions}

The function $u(x)=1-|x|$ is nevertheless workable within the viscosity sense, as at the corner $x=0$, the equation $|u'|=1$ is enforced via comparison with smooth test functions touching $u$ from above and below, rather than by requiring the classical derivative $u'(0)$ to exist. Before showing this, I will introduce viscosity solutions from its basic principles.

The key idea of viscosity solutions is to construct a solution for nonlinear PDEs that doesn't require differentiability of $u$ or rely on divergence form or integration by parts, while preserving the comparison principle and uniqueness. Consider nonlinear second–order PDEs of the form \[\partial_t u + F(x,u,\nabla u,\nabla^2 u)=0, (t,x)\in (0,T]\times\Omega\]

As noted earlier, a classical solution requires \(u\in C^{1,2}((0,T]\times\Omega)\) such that all derivatives exist, and the equation holds pointwise; however, natural solutions of nonlinear PDEs may develop singularities or non-smooth features. Therefore, interpreting the equation must be done without differentiating $u$ directly and instead comparing $u$ with smooth test functions. If a smooth function $\phi$ touches $u$ from above or below at a point, then $\phi$ acts as a local smooth approximation of $u$ near that point. This evaluates the PDE on $\phi$ instead of on $u$. 

\begin{defn}[Touching Points]

Let $u \in C((0,T]\times\Omega)$. A function $\phi \in C^2((0,T]\times\Omega)$ touches $u$ from above at $(t_0,x_0)$ if
\[u - \phi \text{ has a local maximum at } (t_0,x_0)\]

Likewise, function $\phi \in C^2((0,T]\times\Omega)$ touches $u$ from below at $(t_0,x_0)$ if \[u - \phi \text{ has a local minimum at } (t_0,x_0)\]
\end{defn}

For both, the test function $\phi$ is tangent to the graph of $u$ at the touching point.

\begin{defn}[Viscosity Subsolution]
A continuous function $u$ is a viscosity subsolution if for every $\phi\in C^2$
touching $u$ from above at $(t_0,x_0)$:
\(\partial_t\phi(t_0,x_0)
+ H(t_0,x_0,u(t_0,x_0),\nabla\phi(t_0,x_0),\nabla^2\phi(t_0,x_0))
\le 0\)
\end{defn}

\begin{defn}[Viscosity Supersolution]
A continuous function $u$ is a viscosity supersolution if for every
$\phi\in C^2$ touching $u$ from below:

\(\partial_t\phi(t_0,x_0) + H(t_0,x_0,u(t_0,x_0),\nabla\phi(t_0,x_0),\nabla^2\phi(t_0,x_0))
\ge 0\)
\end{defn}

\begin{defn}[Viscosity Solution]
A viscosity solution is a function that is both a subsolution and a supersolution.
\end{defn}

\begin{defn}[Degenerate Ellipticity (Nonlinear)]
As illustrated by Aubert-Kornprobst, the nonlinear PDEs which 
come up in scale-space analysis belong to the class of degenerate elliptic 
Hamilton-Jacobi equations \(\partial_t u + H(t,x,u,\nabla u,\nabla^2 u)=0\) where 
\(H:(0,T]\times\Omega\times\mathbb{R}\times\mathbb{R}^N\times S^N \to \mathbb{R}\) is 
the Hamiltonian and $S^N$ denotes the space of symmetric $N\times N$ matrices. The 
fundamental assumption is degenerate ellipticity:
\[
S \le S'  \Longrightarrow 
H(t,x,r,p,S)\ge H(t,x,r,p,S')
\]
As noted earlier in the PDE sense, here, monotonicity illustrates a stability principle 
as $H$ is non-increasing in its matrix argument: increasing $S$ can only decrease or 
leave unchanged $H$. This is the nonlinear analogue of ellipticity and shows comparison 
principles, from which uniqueness results central to viscosity theory follow. 
\end{defn}

\begin{prop} Let $H:\Omega\times\mathbb{R}\times\mathbb{R}^N\times S^N\to\mathbb{R}$ be continuous and
degenerate elliptic such that \(S\le S' \rightarrow H(x,r,p,S)\ge H(x,r,p,S')\). For $u\in C^2(\Omega)$ satisfying \(H\big(x,u(x),\nabla u(x),\nabla^2 u(x)\big)=0  \text{in }\Omega\), the following holds: if $\phi\in C^2(\Omega)$ and $x_0\in\Omega$ is a local maximum of $(u-\phi)$, then \(H\big(x_0,u(x_0),\nabla\phi(x_0),\) \(\nabla^2\phi(x_0)\big) \le 0\). If $x_0$ is a local minimum of $(u-\phi)$, then \(H\big(x_0,u(x_0),\nabla\phi(x_0),\) \(\nabla^2\phi(x_0)\big)\ge 0\).
\end{prop}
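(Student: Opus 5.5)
The plan is to use the classical first- and second-order conditions for an interior extremum, applied to the $C^2$ function $w := u - \phi$, and then transfer the equation satisfied by $u$ to the test function $\phi$ through degenerate ellipticity. Throughout, $u$ and $\phi$ are both $C^2$, so $w \in C^2(\Omega)$ and the usual calculus facts apply at the interior point $x_0$.

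\emph{Local maximum.} Suppose $x_0$ is a local maximum of $w$. Since $x_0$ is interior and $w$ is differentiable, $\nabla w(x_0) = 0$, which gives $\nabla \phi(x_0) = \nabla u(x_0)$. The second-order necessary condition gives $\nabla^2 w(x_0) \le 0$ in the sense of symmetric matrices, which gives $\nabla^2 u(x_0) \le \nabla^2 \phi(x_0)$.

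The second-order condition is the one step needing a brief argument. I would prove it by Taylor expansion along each direction $\xi \in \mathbb{R}^N$: the function $g(s) := w(x_0 + s\xi)$ has a local maximum at $s = 0$, so $g''(0) = \xi^T \nabla^2 w(x_0)\xi \le 0$. If $g''(0) > 0$, then since $g'(0) = 0$ the expansion $g(s) = g(0) + \tfrac12 g''(0)s^2 + o(s^2)$ would give $g(s) > g(0)$ for small $s \neq 0$, a contradiction. This is the only place where the argument requires care, and it is elementary.

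Now apply degenerate ellipticity with $S = \nabla^2 u(x_0) \le S' = \nabla^2 \phi(x_0)$, keeping the arguments $x = x_0$, $r = u(x_0)$ and $p = \nabla u(x_0) = \nabla\phi(x_0)$ fixed:
\[
H\big(x_0,u(x_0),\nabla\phi(x_0),\nabla^2\phi(x_0)\big) \le H\big(x_0,u(x_0),\nabla u(x_0),\nabla^2 u(x_0)\big) = 0,
\]
where the final equality is the pointwise equation satisfied by $u$.

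\emph{Local minimum.} The case of a local minimum is symmetric. We again have $\nabla\phi(x_0) = \nabla u(x_0)$, but now $\nabla^2 w(x_0) \ge 0$, so $\nabla^2 \phi(x_0) \le \nabla^2 u(x_0)$. Degenerate ellipticity reverses accordingly and gives
\[
H\big(x_0,u(x_0),\nabla\phi(x_0),\nabla^2\phi(x_0)\big) \ge H\big(x_0,u(x_0),\nabla u(x_0),\nabla^2 u(x_0)\big) = 0.
\]

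Continuity of $H$ is not actually needed for this pointwise argument. It becomes relevant only later, for the stability of viscosity solutions. The conclusion is that every classical solution is a viscosity solution, which justifies the definitions given above.
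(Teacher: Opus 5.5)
Your argument is correct and is essentially the paper's proof. In both, the first- and second-order conditions at the interior extremum of $u-\phi$ give $\nabla u(x_0)=\nabla\phi(x_0)$ and the Hessian inequality, and degenerate ellipticity then transfers $H=0$ to the test function; your directional Taylor-expansion argument for $\nabla^2(u-\phi)(x_0)\le 0$ just fills in the step the paper calls standard calculus.
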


\begin{proof}
Assume $x_0$ is a local maximum of $(u-\phi)$. Then, by standard calculus, \(\nabla u(x_0)=\nabla\phi(x_0)\) and \(\nabla^2 u(x_0)\le \nabla^2\phi(x_0)\). By degenerate ellipticity,

\(H\big(x_0,u(x_0),\nabla u(x_0),\nabla^2\phi(x_0)\big)
\le H\big(x_0,u(x_0),\nabla u(x_0),\nabla^2 u(x_0)\big)\).

Using $H(x_0,u(x_0),\nabla u(x_0),\nabla^2 u(x_0))=0$ and $\nabla u(x_0)=\nabla\phi(x_0)$, 
\[
H\big(x_0,u(x_0),\nabla\phi(x_0),\nabla^2\phi(x_0)\big)\le 0
\]
WLOG, this applies to the local minimum case: if $x_0$ is a local minimum of $(u-\phi)$ then
$\nabla^2 u(x_0)\ge \nabla^2\phi(x_0)$, and degenerate ellipticity results in the reversed inequality.
\end{proof}

Through viscosity solutions, the PDE is enforced through inequalities evaluated on smooth test functions rather than on the possibly nonsmooth solution itself. Derivatives of $u$ never appear as all derivatives are taken on $\phi$, thus providing a strong notion of solution for nonlinear PDEs where classical and weak formulations fail.

\subsection{Comparison Principle and Uniqueness}

The definitions above provide a notion of solution for fully nonlinearnon-divergence form equations, but they don't yet guarantee that the resulting evolution is well posed. The key property that resolves this issues is the comparison principle, which shows viscosity solutions are order-preserving and lead directly to uniqueness and the well-posedness of the nonlinear evolution semigroup. Discussion here does not establish existence of viscosity solutions in full generality here; instead, it assumes existence under standard structural conditions on \(F\), as guaranteed by classical results in viscosity theory.

\begin{thm}[Comparison Principle]

Assume \(F : \Omega \times \mathbb{R} \times \mathbb{R}^n \times S^n \to \mathbb{R}\) is continuous, proper (non-decreasing in \(u\)), and degenerate elliptic, i.e. \(X \le Y \;\Rightarrow\; F(x,r,p,X) \ge F(x,r,p,Y)\). Let $u$ be a bounded viscosity subsolution and $v$ a bounded viscosity
supersolution of \(\partial_t w + F(x,w,\nabla w,\nabla^2 w)=0
 \text{ in } (0,T]\times\Omega\). If \(u(0,x)\le v(0,x) \text{for all }x\in\Omega,\) then \(u(t,x)\le v(t,x)\)
 \(\text{for all }(t,x)\in[0,T]\times\Omega\). 
\end{thm}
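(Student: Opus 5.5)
I would prove this by the standard doubling-of-variables argument combined with the Crandall--Ishii lemma (the theorem on sums), after a penalization that makes the supremum of \(u-v\) be attained. Suppose, for contradiction, that \(\sigma := \sup_{[0,T]\times\Omega}(u-v) > 0\). Since \(\Omega\) may be unbounded and \(u,v\) are only bounded, I would first replace \(u\) by the strict subsolution
\[
u_{\gamma}(t,x) := u(t,x) - \frac{\gamma}{T-t} ,
\]
which satisfies \(\partial_t u_\gamma + F \le -\gamma/T^2<0\) in the viscosity sense (using properness: subtracting a positive quantity from \(u\) does not increase \(F\)). This also forces the supremum away from \(t=T\). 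I would then add a spatial localization term \(\beta(|x|^2+|y|^2)\) to keep the maximum in a compact set. For the boundary of \(\Omega\) I would either take \(\Omega=\mathbb{R}^n\) or assume \(u\le v\) on \(\partial\Omega\); the statement is silent on this, so the hypothesis has to be added.

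Next, for \(\varepsilon>0\) I would consider
\[
\Phi(t,x,y) := u_\gamma(t,x) - v(t,y) - \frac{|x-y|^2}{2\varepsilon} - \beta(|x|^2+|y|^2)
\]
and let \((t_\varepsilon,x_\varepsilon,y_\varepsilon)\) be a maximum point. The standard penalization lemma gives \(|x_\varepsilon-y_\varepsilon|^2/\varepsilon\to0\) as \(\varepsilon\to0\). Because \(u(0,\cdot)\le v(0,\cdot)\) and \(\sigma>0\), uniform continuity of the initial data shows \(t_\varepsilon>0\) for small \(\gamma,\beta,\varepsilon\).

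The theorem on sums then produces \(a\in\mathbb{R}\) and symmetric \(X,Y\). With \(p_\varepsilon=(x_\varepsilon-y_\varepsilon)/\varepsilon\) plus \(O(\beta)\) corrections, the pair \((a,p_\varepsilon,X)\) lies in the closed parabolic superjet of \(u_\gamma\) and \((a,p_\varepsilon,Y)\) in the subjet of \(v\). The matrices satisfy
\[
\begin{pmatrix} X & 0\\ 0 & -Y\end{pmatrix}\le \frac{3}{\varepsilon}\begin{pmatrix} I & -I\\ -I & I\end{pmatrix},
\]
which in particular gives \(X\le Y\). Subtracting the sub- and supersolution inequalities yields
\[
\frac{\gamma}{T^2}\le F(y_\varepsilon,v,p_\varepsilon,Y)-F(x_\varepsilon,u_\gamma,p_\varepsilon,X).
\]
Properness lets me replace \(u_\gamma\) by \(v\), since \(u_\gamma> v\) at the maximum. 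Degenerate ellipticity together with \(X\le Y\) then handles the matrix argument.

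The main obstacle is the \(x\)-dependence. With continuity of \(F\) alone the residual \(F(y_\varepsilon,\cdot)-F(x_\varepsilon,\cdot)\) need not vanish. I would therefore assume the standard structure condition
\[
F(y,r,\alpha(x-y),Y)-F(x,r,\alpha(x-y),X)\le \omega\bigl(\alpha|x-y|^2+|x-y|\bigr),
\]
which holds automatically in the translation-invariant AGLM setting \(F=F(r,p,X)\) of the previous section. With this condition the right-hand side tends to \(0\) as \(\varepsilon\to0\), then \(\beta\to0\), contradicting \(\gamma/T^2>0\). Letting \(\gamma\to0\) gives \(u\le v\) on \([0,T]\times\Omega\).
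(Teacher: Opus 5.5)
Your argument follows the same route as the paper's sketch of Evans' proof: doubling of variables plus the Crandall--Ishii lemma with the \(3/\varepsilon\) matrix bound. The only cosmetic difference is that you keep a common time variable and use the parabolic theorem on sums, whereas the paper also doubles \(t\) with a \(|t-s|^2/(2\varepsilon)\) penalty.

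The substantive difference is that you supply ingredients the paper's sketch omits, and you are right that they are needed.
\begin{enumerate}
\item After subtracting, the paper has only \(F(x_\varepsilon,u,p_\varepsilon,X_\varepsilon)-F(y_\varepsilon,v,p_\varepsilon,Y_\varepsilon)\le 0\). This is fully compatible with properness and degenerate ellipticity, so without your strict subsolution \(u-\gamma/(T-t)\) there is nothing to contradict.
\item The paper asserts compactness of \([0,T]\times\overline{\Omega}\) even for \(\Omega=\mathbb{R}^n\); your \(\beta\)-localization repairs this.
\item The paper never uses \(u(0,\cdot)\le v(0,\cdot)\) to keep the maximum off \(t=0\), where the viscosity inequalities are unavailable.
\item The paper relies on mere continuity of \(F\) in \(x\) at arguments \(p_\varepsilon,X_\varepsilon,Y_\varepsilon\) that blow up.
\end{enumerate}
Your added hypotheses (whole space, a structure condition, uniformly continuous initial data) are therefore not a weakness. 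The first coincides with the paper's own assumption.

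One step does not work as written: the order of limits for the localization corrections. The localization shifts the jets, giving \((a,p_\varepsilon+2\beta x,X+2\beta I)\in\overline{J}^{2,+}u_\gamma\) and \((a,p_\varepsilon-2\beta y,Y-2\beta I)\in\overline{J}^{2,-}v\), where \((t,x,y)\) is the maximum point, depending on \(\varepsilon\) and \(\beta\). Both Hessian shifts go in the unfavourable direction for ellipticity, so these corrections, like the gradient ones, must be removed by continuity of \(F\). If you let \(\varepsilon\to0\) first, as you propose, this fails: \(|p_\varepsilon|\) and \(\|X\|,\|Y\|\) are unbounded, and continuity gives no uniform control at fixed \(\beta\).

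In the translation-invariant case, reverse the order. Put \(C=\|u\|_\infty+\|v\|_\infty\). Positivity of the maximum gives \(|x-y|^2/(2\varepsilon)+\beta(|x|^2+|y|^2)<C\), so uniformly in \(\beta\) you have:
\begin{itemize}
\item \(|p_\varepsilon|\le\sqrt{2C/\varepsilon}\);
\item \(\beta|x|,\beta|y|\le\sqrt{\beta C}\);
\item \(\|X\|,\|Y\|\le 3/\varepsilon\), using also the lower bound \(-\tfrac{3}{\varepsilon}I\le\mathrm{diag}(X,-Y)\) from the theorem on sums.
\end{itemize}
Fix \(\varepsilon\) small first, using uniform continuity of the initial data on all of \(\mathbb{R}^n\), so the maximum stays at \(t>0\) for every small \(\beta\). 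Then letting \(\beta\to0\) kills the corrections by uniform continuity of \(F\) on compact sets. Properness plus \(X\le Y\) alone then contradict \(\gamma/T^2>0\), with no structure condition needed.

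For genuinely \(x\)-dependent \(F\), the structure condition forces \(\varepsilon\to0\). There you must additionally assume a uniform modulus \(|F(x,r,p,X)-F(x,r,q,Z)|\le\omega(|p-q|+\|X-Z\|)\), as in the standard whole-space comparison theorems.
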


This, analogous to the maximum principle for the heat equation, demonstrating that the nonlinear evolution is order-preserving. If crossing were possible, there would be a first time when $u - v$ attains a positive maximum. At this first crossing point, the two surfaces would touch tangentially, so the spatial gradients must coincide, 
$\nabla u = \nabla v$, and the Hessians satisfy $\nabla^2 u \le \nabla^2 v$. By the 
degenerate ellipticity of $F$, increasing the matrix argument can only decrease $F$, so
\[
F(x,u,\nabla u,\nabla^2 u) \;\ge\; F(x,u,\nabla u,\nabla^2 v)
\]
This forces the PDE to act against any attempted crossing, preventing $u-v$ 
from becoming positive. Hence the ordering $u \le v$ is preserved and the solutions cannot cross.

Evans' version of the full argument uses the doubling of variable  method and the Crandall-Ishii lemma to handle nonsmooth functions.

\begin{proof} (Sketch of Evans')

Let $u$ be a bounded viscosity subsolution and $v$ a bounded viscosity supersolution of \(\partial_t w + F(x,w,\nabla w,\nabla^2 w)=0 \text{ in }(0,T]\times\Omega\)
where $F$ is continuous and degenerate elliptic. For simplicity, assume either $\Omega=\mathbb{R}^n$ or periodic boundary
conditions so that boundary effects do not arise. Assume for contradiction \(M:=\sup_{(t,x)\in[0,T]\times\Omega}\big(u(t,x)-v(t,x)\big)>0\). For $\varepsilon>0$, let
\[
\Phi_\varepsilon(t,x,s,y)
=
u(t,x)-v(s,y)
-\frac{|x-y|^2}{2\varepsilon}
-\frac{|t-s|^2}{2\varepsilon}
\]
Since $u,v$ are bounded and $[0,T]\times\overline{\Omega}$ is compact,
the upper semicontinuous function $\Phi_\varepsilon$ attains a maximum
at some point $(t_\varepsilon,x_\varepsilon,s_\varepsilon,y_\varepsilon)$. From the fact that it attains a maximum, for any fixed $(\bar t,\bar x)$,
\[
\Phi_\varepsilon(t_\varepsilon,x_\varepsilon,s_\varepsilon,y_\varepsilon)
\ge
\Phi_\varepsilon(\bar t,\bar x,\bar t,\bar x)
=
u(\bar t,\bar x)-v(\bar t,\bar x)
\]
Picking $(\bar t,\bar x)$ near a point where $u-v$ is close to $M$ gives
\[
u(t_\varepsilon,x_\varepsilon)-v(s_\varepsilon,y_\varepsilon)\ge M-o(1)
\]
However, boundedness of $u$ and $v$ tells us
\[
\frac{|x_\varepsilon-y_\varepsilon|^2}{2\varepsilon}+\frac{|t_\varepsilon-s_\varepsilon|^2}{2\varepsilon}
\le C
\]
for constant $C$ independent of $\varepsilon$. So, \(|x_\varepsilon-y_\varepsilon|\to 0,
|t_\varepsilon-s_\varepsilon|\to 0 \text{ as }\varepsilon\to 0\). Set
\[
p_\varepsilon:=\frac{x_\varepsilon-y_\varepsilon}{\varepsilon}, a_\varepsilon:=\frac{t_\varepsilon-s_\varepsilon}{\varepsilon}
\]
At the maximum point of $\Phi_\varepsilon$, the Crandall-Ishii lemma gives symmetric matrices
$X_\varepsilon,Y_\varepsilon\in S^n$ such that
\[
(a_\varepsilon,p_\varepsilon,X_\varepsilon)\in \overline{J}^{2,+}u(t_\varepsilon,x_\varepsilon),(a_\varepsilon,p_\varepsilon,Y_\varepsilon)\in \overline{J}^{2,-}v(s_\varepsilon,y_\varepsilon)
\]
and the matrices satisfy
\[
\begin{pmatrix}
X_\varepsilon & 0\\
0 & -Y_\varepsilon
\end{pmatrix}
\le
\frac{3}{\varepsilon}
\begin{pmatrix}
I & -I\\
-I & I
\end{pmatrix}
\]

Since $u$ is a subsolution and $v$ is a supersolution
\[
a_\varepsilon + F\big(x_\varepsilon,u(t_\varepsilon,x_\varepsilon),p_\varepsilon,X_\varepsilon\big)\le 0
\]
\[
a_\varepsilon + F\big(y_\varepsilon,v(s_\varepsilon,y_\varepsilon),p_\varepsilon,Y_\varepsilon\big)\ge 0
\]
By subtracting, \(F\big(x_\varepsilon,u(t_\varepsilon,x_\varepsilon),p_\varepsilon,X_\varepsilon\big)
-
F\big(y_\varepsilon,v(s_\varepsilon,y_\varepsilon),p_\varepsilon,Y_\varepsilon\big)
\le 0\). Using degenerate ellipticity (monotonicity in the Hessian) and continuity of $F$, \(x_\varepsilon-y_\varepsilon\to 0, t_\varepsilon-s_\varepsilon\to 0\), an inequality forcing \(M=\sup(u-v)\le 0\) is obtained in the limit $\varepsilon\to 0$. This contradicts $M>0$. So, $\sup_{[0,T]\times\Omega}(u-v)\le 0$.
\end{proof}

\subsection{From Viscosity Theory to Nonlinear Diffusion Semigroups}

As seen, once comparison holds, the entire nonlinear evolution becomes well-posed as comparison implies that viscosity solutions are unique.

\begin{cor}[Uniqueness of Viscosity Solutions]
Assume $F$ is continuous and degenerate elliptic. If $u$ and $v$ are viscosity solutions of \(\partial_t w + F(x,w,\nabla w,\nabla^2 w)=0\) with the same initial data $u(0,x)=v(0,x)$, then \(u(t,x)=v(t,x) \text{for all } (t,x)\in[0,T]\times\Omega\).
\end{cor}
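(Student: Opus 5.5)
The plan is to apply the Comparison Principle (Theorem above) twice, once in each direction. A viscosity solution is by definition both a subsolution and a supersolution, so each of $u$ and $v$ can play either role in that theorem. The only preliminary check is that the hypotheses of the Comparison Principle are in force. Degenerate ellipticity and continuity of $F$ are assumed directly. Properness (monotonicity in $r$) and boundedness of $u,v$ are implicit standing assumptions inherited from the setting of the theorem: bounded uniformly continuous data, with $\Omega=\mathbb{R}^n$ or periodic. I will state explicitly that the corollary is understood under these same hypotheses.

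\textbf{First application.} Take $u$ in the role of the bounded viscosity subsolution and $v$ in the role of the bounded viscosity supersolution. Since $u(0,x)=v(0,x)$, in particular $u(0,\cdot)\le v(0,\cdot)$ on $\Omega$. The Comparison Principle then gives $u\le v$ on $[0,T]\times\Omega$.

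\textbf{Second application.} Exchange the roles: $v$ is a subsolution, $u$ is a supersolution, and $v(0,\cdot)\le u(0,\cdot)$. This yields $v\le u$ on $[0,T]\times\Omega$.

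Combining the two inequalities gives $u=v$ everywhere on $[0,T]\times\Omega$. Since $T>0$ is arbitrary, the same argument gives equality on $[0,\infty)\times\Omega$. This is the uniqueness the semigroup argument of Section 2 relies on, so $T_t u_0$ is well defined.

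The main obstacle is not the symmetric argument itself but the gap between the corollary's stated hypotheses and those of the Comparison Principle. The corollary mentions only continuity and degenerate ellipticity, while the theorem also requires properness and boundedness, and its sketch assumes no boundary effects. My first step is therefore to add these as standing assumptions. If one wants to avoid assuming properness, the fallback is the standard change of unknown $\tilde w=e^{-\lambda t}w$. This converts the equation into one whose nonlinearity is strictly increasing in $r$ when $F$ is Lipschitz in $r$, and the transformation preserves the sub- and supersolution properties. One would then apply comparison to $e^{-\lambda t}u$ and $e^{-\lambda t}v$, and undo the transformation to conclude $u=v$.
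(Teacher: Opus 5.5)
Your proposal is correct and matches the paper's proof: apply the comparison principle to $(u,v)$ and then to $(v,u)$, using that each viscosity solution is both a sub- and supersolution, and combine the two inequalities. Your explicit note that boundedness and properness must be carried over from the comparison theorem fills a hypothesis gap that the paper's statement leaves implicit. The $e^{-\lambda t}$ fallback would additionally need a comparison principle for a $t$-dependent nonlinearity, but the main argument does not rely on it.
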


\begin{proof}
Since $u$ and $v$ are viscosity solutions, each is both a subsolution and supersolution. Applying the comparison principle to $(u,v)$ gives \(u(t,x)\le v(t,x) \text{ for all }(t,x)\). Switching $u$ and $v$ and, again, applying the comparison principle,\(v(t,x)\le u(t,x) \forall (t,x)\). Combining the two results gives $u(t,x)=v(t,x)$, which proves uniqueness.
\end{proof}

Existence together with uniqueness yields a well-defined evolution operator \(T_t u_0 := u(t,\cdot), t \ge 0\). 

\begin{prop}[Supremum-norm contraction of the nonlinear semigroup]
Let $\{T_t\}_{t\ge0}$ denote the evolution operators generated by the
equation. Then for any initial data $u_0, v_0 \in X$,
\[
\|T_t u_0 - T_t v_0\|_{\infty}
\le
\|u_0 - v_0\|_{\infty},
t \ge 0
\]
This contraction property is a fundamental structural feature of
nonlinear semigroups generated by degenerate elliptic evolution
equations and follows from the general nonlinear semigroup framework
developed by Crandall and Liggett. Assume the equation is invariant under addition of constants and that $(T_t)_{t\ge 0}$ is order-preserving (comparison principle) and satisfies the semigroup properties. Then, for all bounded initial data $u_0,v_0$,
\(\|T_t u_0 - T_t v_0\|_{L^\infty(\Omega)}\) \(\le \|u_0 -v_0\|_{L^\infty(\Omega)} \forall t\ge 0\).
\end{prop}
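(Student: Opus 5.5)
The proof is the Crandall–Tartar argument: order preservation together with invariance under adding constants gives sup-norm non-expansiveness. Set $c := \|u_0 - v_0\|_{L^\infty(\Omega)}$, which is finite because both data are bounded. Then pointwise
\[
v_0 - c \le u_0 \le v_0 + c \quad \text{on } \Omega .
\]
The plan is to push these two inequalities through $T_t$ using order preservation, namely causality (item 2 of the scale-space axioms), or equivalently the Comparison Principle applied to the viscosity solutions. Then we identify $T_t(v_0 \pm c)$ with $T_t v_0 \pm c$.

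\textbf{Step 1: commutation with constants.} I would show $T_t(w_0 + c) = T_t w_0 + c$ for every constant $c$ and all bounded $w_0$. Let $w(t,\cdot) = T_t w_0$ and put $\tilde w := w + c$. Any $\phi \in C^2$ touching $\tilde w$ from above at $(t_0,x_0)$ gives $\phi - c$ touching $w$ from above at the same point, with the same derivatives $\partial_t\phi$, $\nabla\phi$, $\nabla^2\phi$. The hypothesis that the equation is invariant under addition of constants means that $F(x,r,p,X)$ does not depend on $r$, or more generally that the equation is unchanged under $r \mapsto r + c$. Under it, the subsolution inequality for $w$ at $(t_0,x_0)$ is exactly the subsolution inequality for $\tilde w$. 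The supersolution case is the same, so $\tilde w$ is a viscosity solution with initial datum $w_0 + c$. By the Corollary on uniqueness of viscosity solutions, $\tilde w(t,\cdot) = T_t(w_0 + c)$.

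\textbf{Step 2: order preservation.} Apply the Comparison Principle, with $u$ the solution from $u_0$ and $v$ the solution from $v_0 + c$, to get $T_t u_0 \le T_t(v_0 + c) = T_t v_0 + c$. In the same way, $v_0 - c \le u_0$ gives $T_t v_0 - c \le T_t u_0$. Together these give $|T_t u_0 - T_t v_0| \le c$ pointwise on $\Omega$ for every $t \ge 0$. Taking the supremum over $x$ proves the claim, and $t = 0$ is trivial since $T_0 = \mathrm{Id}_X$.

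\textbf{Main obstacle.} The delicate point is Step 1, specifically making the constant-invariance hypothesis precise. If $F$ genuinely depends on $r$, being merely proper (non-decreasing in $r$), then $w + c$ is no longer a solution. In that case I would instead check that for $c \ge 0$ it is a viscosity supersolution: shifting $r$ up increases $F$, so the supersolution inequality survives. Likewise $w - c$ is a subsolution. Comparison then still yields the one-sided bounds $T_t u_0 \le T_t v_0 + c$ and $T_t v_0 - c \le T_t u_0$, so the contraction holds under properness alone.

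A secondary issue is that the Comparison Principle as stated needs bounded sub- and supersolutions. Boundedness of $w \pm c$ follows from boundedness of $w$. Boundedness of $w$ itself comes from comparing with the constant solutions $\pm\|w_0\|_\infty$, which are solutions in the constant-invariant case (and super/subsolutions under properness with $F(x,0,0,0) = 0$).
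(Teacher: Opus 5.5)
Your proposal is correct and uses the same argument as the paper: sandwich $v_0 - M \le u_0 \le v_0 + M$, apply order preservation, then use $T_t(v_0 \pm M) = T_t v_0 \pm M$. The paper simply assumes that shift identity, whereas you derive it from $r$-independence of $F$ plus uniqueness, and your properness variant is a valid strengthening. One minor slip in your boundedness remark: invariance under constants alone does not make constants solutions (e.g.\ $\partial_t w + 1 = 0$), so you also need $F(x,r,0,0)=0$, but this does not affect the main argument.
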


\begin{proof}
Let \(M := \|u_0 - v_0\|_{L^\infty(\Omega)}\). By definition of the sup norm, \(v_0 - M \le u_0 \le v_0 + M  \text{in } \Omega\). Applying order preservation of $T_t$ gives
\[
T_t(v_0 - M) \le T_t(u_0) \le T_t(v_0 + M)
\]
Also, the evolution is invariant under vertical shifts
\[
T_t(v_0 \pm M) = T_t(v_0) \pm M
\]

Taken together, these give \(T_t v_0 - M \le T_t u_0 \le T_t v_0 + M
\) which tells us \(|T_t u_0 - T_t v_0| \le M  \text{in } \Omega\). Taking the supremum over $\Omega$ gives
\[
\|T_t u_0 - T_t v_0\|_{L^\infty(\Omega)}
\le M
=
\|u_0 - v_0\|_{L^\infty(\Omega)}
\]
\end{proof}

This shows that the continuous evolution operator \(T_t\) is non-expansive in the supremum norm: perturbations in the initial data are not amplified by the PDE evolution. The scale parameter \(t\) is therefore the time variable of a well-posed nonlinear evolution, rather than merely a formal smoothing parameter. Under the structural assumptions above, such a multiscale representation can be interpreted as a nonlinear diffusion semigroup acting on the initial image. In the later reconstruction pipeline, this continuous theory serves as motivation and mathematical context for the diffusion stage.

\begin{center}
\begin{tabular}{c}
Degenerate ellipticity \\[3pt]
$\Downarrow$ \\[3pt]
Comparison principle \\[3pt]
$\Downarrow$ \\[3pt]
Uniqueness of viscosity solutions \\[3pt]
$\Downarrow$ \\[3pt]
Stable evolution operator $T_t$ \\[3pt]
$\Downarrow$ \\[3pt]
Nonlinear diffusion semigroup
\end{tabular}
\end{center}

\section{From Representation Learning to Stable Reconstruction}

Two components have been established: a learned reconstruction operator 
\(G_\theta : X \to X\) and a nonlinear diffusion semigroup 
\(\{T_t\}_{t \ge 0}\) that is non-expansive in the supremum norm. 
These are combined through composition to obtain a reconstruction operator 
with controlled stability properties.

\begin{defn}[Hybrid Reconstruction Operator]
Let \(G_\theta : X \to X\) be a learned reconstruction map and let 
\(\{T_t\}_{t \ge 0}\) be a nonlinear diffusion semigroup generated by a 
degenerate elliptic parabolic equation. The hybrid operator is defined by
\[
\mathcal{H}_t(u_0) := T_t(G_\theta(u_0))
\]
\end{defn}

\begin{prop}[Stability of the hybrid operator]
Assume that \(T_t\) is contractive in \(L^\infty(\Omega)\) and that 
\(G_\theta\) is Lipschitz with constant \(L_\theta\). Then
\[
\|\mathcal{H}_t(u_0) - \mathcal{H}_t(v_0)\|_\infty
\le
L_\theta \|u_0 - v_0\|_\infty
\]
\end{prop}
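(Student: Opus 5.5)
The estimate follows by composing the two Lipschitz bounds. Fix bounded initial data \(u_0, v_0 \in X\) and set \(\tilde u_0 := G_\theta(u_0)\), \(\tilde v_0 := G_\theta(v_0)\). By the definition of the hybrid operator, \(\mathcal{H}_t(u_0) - \mathcal{H}_t(v_0) = T_t\tilde u_0 - T_t\tilde v_0\). The proof therefore has two steps: first use the contraction of \(T_t\) to compare the diffused outputs with the network outputs, then use the Lipschitz bound on \(G_\theta\) to compare the network outputs with the original inputs.

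\textbf{Step 1 (diffusion stage).} I apply the supremum-norm contraction proposition proved above to the pair \(\tilde u_0, \tilde v_0\). This gives
\[
\|T_t\tilde u_0 - T_t\tilde v_0\|_\infty \le \|\tilde u_0 - \tilde v_0\|_\infty .
\]
That proposition's proof uses order preservation and invariance under adding constants. It therefore only requires \(\tilde u_0\) and \(\tilde v_0\) to be bounded, so that \(M = \|\tilde u_0 - \tilde v_0\|_\infty\) is finite and the sandwich \(\tilde v_0 - M \le \tilde u_0 \le \tilde v_0 + M\) makes sense.

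\textbf{Step 2 (learned stage).} The Lipschitz hypothesis on \(G_\theta\), taken with respect to the sup norm, gives
\[
\|\tilde u_0 - \tilde v_0\|_\infty \le L_\theta\|u_0 - v_0\|_\infty .
\]
Chaining the inequalities from Steps 1 and 2 yields the claimed estimate for every \(t \ge 0\).

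\textbf{Main obstacle.} The calculation is trivial; the real issue is whether the hypotheses of the contraction proposition hold for the data \(G_\theta(u_0)\). Two points need attention.
\begin{enumerate}
\item \emph{Boundedness and uniform continuity of \(G_\theta(u_0)\).} This is needed so that a viscosity solution exists and comparison applies. Boundedness follows from the Lipschitz bound: \(\|G_\theta(u_0)\|_\infty \le \|G_\theta(0)\|_\infty + L_\theta\|u_0\|_\infty\). Uniform continuity has to be assumed of the architecture, or built into the choice of \(X\) so that \(G_\theta\) maps \(X\) into \(X\), as the definition of the hybrid operator already stipulates.
\item \emph{Norm consistency.} The Lipschitz constant \(L_\theta\) must be measured in \(\|\cdot\|_\infty\) on both sides. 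If \(G_\theta\) were only Lipschitz in \(L^2\), the chain would break, and one would need an additional norm-equivalence constant on the discretized image grid.
\end{enumerate}
I would state both of these as standing assumptions, implicit in the hypothesis that ``\(T_t\) is contractive in \(L^\infty(\Omega)\)'', rather than try to prove them.
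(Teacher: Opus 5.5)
Your proof is correct and is the same argument as the paper's, which is the single chain \(\|\mathcal{H}_t(u_0)-\mathcal{H}_t(v_0)\|_\infty = \|T_t(G_\theta(u_0)) - T_t(G_\theta(v_0))\|_\infty \le \|G_\theta(u_0)-G_\theta(v_0)\|_\infty \le L_\theta\|u_0-v_0\|_\infty\). Step 1 follows directly from the stated hypothesis that \(T_t\) is contractive, so you do not need to route it through the earlier sup-norm contraction proposition and its order-preservation and shift-invariance assumptions.
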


\begin{proof}
\[
\|\mathcal{H}_t(u_0) - \mathcal{H}_t(v_0)\|_\infty
=
\|T_t(G_\theta(u_0)) - T_t(G_\theta(v_0))\|_\infty
\le
\|G_\theta(u_0) - G_\theta(v_0)\|_\infty
\le
L_\theta \|u_0 - v_0\|_\infty
\]
\end{proof}

The estimate shows that, at the level of the continuous semigroup, the diffusion does not amplify the sensitivity of the learned reconstruction. Any instability in the pipeline is therefore 
inherited entirely from the Lipschitz behavior of \(G_\theta\), while the 
diffusion stage preserves non-expansiveness.

\vspace{2mm}
\begin{rem}[Lipschitz Bound and Practical Stability]
The stability estimate of Proposition~3.2 bounds perturbation amplification 
through $H_t$ by the Lipschitz constant $L_\theta$ of the learned map $G_\theta$. 
For neural networks, $L_\theta$ can be bounded above by the product of per-layer 
spectral norms $\prod_k \|W_k\|_2$, computable via power iteration on each weight 
matrix. In our implementation, this quantity was not measured at evaluation time; 
a direct numerical verification of the bound therefore remains a direction for 
future work. 
\end{rem}
\vspace{2mm}

In inverse problems such as medical image reconstruction, the mapping \(u_{\mathrm{out}} = G_\theta(u_{\mathrm{in}})\) produces an image rather than an intermediate feature representation. 
Unlike the semigroup operators introduced earlier, \(G_\theta\) is not 
constructed to satisfy a comparison principle or any contractive property. However, the composition
\[
u_{\mathrm{out}} = T_t(G_\theta(u_{\mathrm{in}}))
\]
introduces a structurally controlled evolution. The semigroup \(T_t\) enforces 
order preservation and non-expansiveness, therefore suppressing perturbations 
while preserving large-scale geometric features.

\subsection{CT Reconstruction Pipeline and Network Architecture}

The reconstruction stage in our current pipeline is implemented as a 3D U-Net generator which takes as input a tensor of shape \((D,H,W,1)\) with default size \((3,512,512,1)\). The depth dimension \(D=3\) is interpreted as a stack of neighboring CT slices, while the network performs convolutional encoding and decoding with downsampling only in the directions \((H,W)\) with strides \((1,2,2)\). 

\begin{figure}[H]
    \centering
    \includegraphics[width=0.65\linewidth]{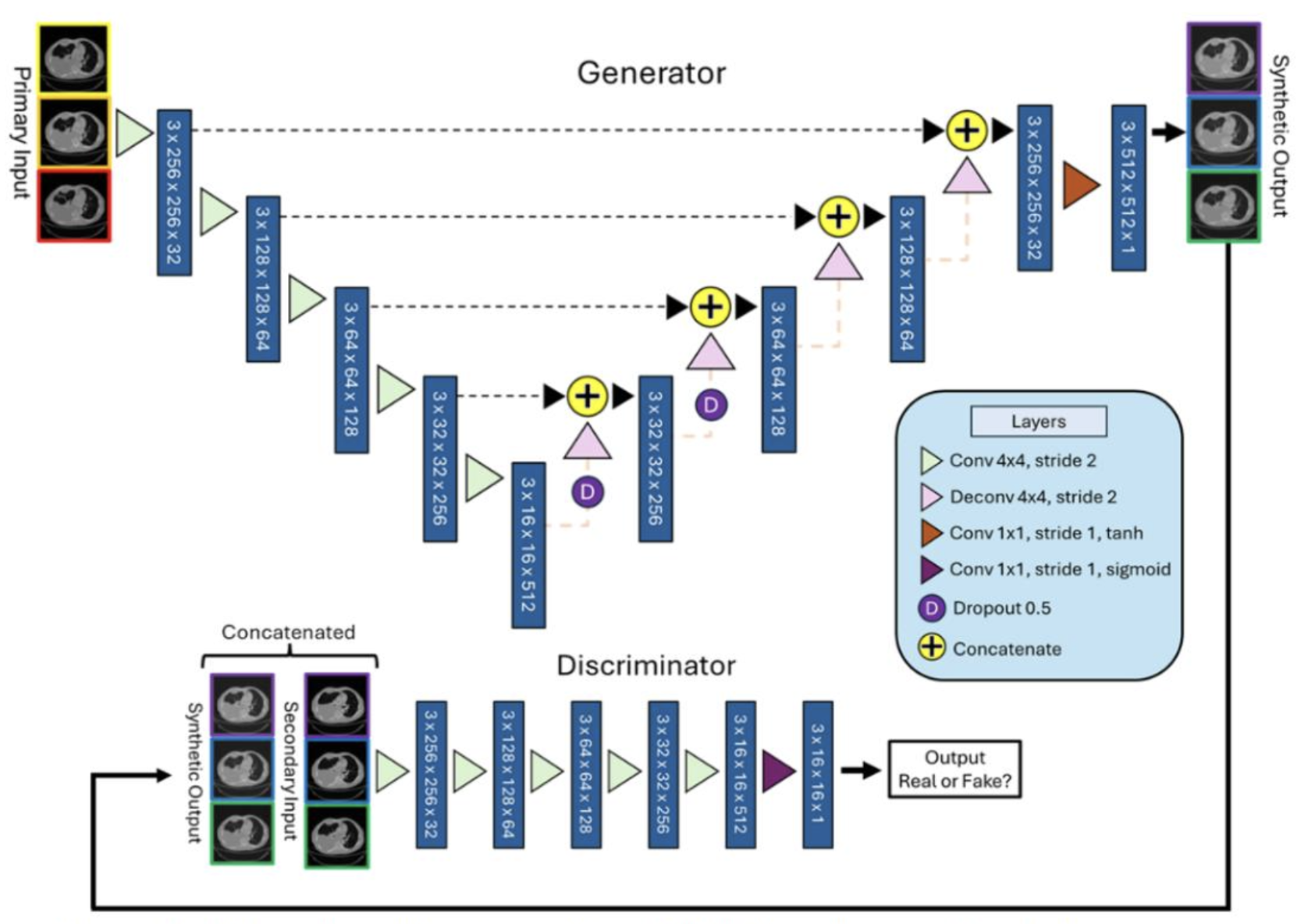}
    \caption{Network architecture; the shapes of each layer are depth, height, width, and channels.}
    \label{fig:placeholder}
\end{figure}

Specifically, the pipeline implements a learned inverse/denoising operator \(u_{\text{out}} = G_\theta(u_{\text{in}})\), where \(G_\theta\) is a neural network with parameters \(\theta\). Training seeks parameters \(\theta\) that minimize an empirical risk of the form 
\[\min_\theta \;\mathbb{E}_{(u_{\text{in}},u_{\text{gt}})}
\big[ \mathcal{L}(G_\theta(u_{\text{in}}),u_{\text{gt}}) \big]
\] 

This loss can be interpreted variationally as a combination of a fidelity and regularity term:
\[
\mathcal{L}(u,u_{\text{gt}})
=
\underbrace{\|u-u_{\text{gt}}\|_{L^2(\Omega)}^2}_{\text{data fidelity}}
\;+\;
\lambda\,\underbrace{\mathcal{R}(u)}_{\text{regularity}}
\;+\;
\gamma\,\mathcal{L}_{\text{adv}}(u)
\]

The fidelity term enforces agreement with ground truth reconstructions while the regularizer \(\mathcal{R}(u)\) encodes smoothness, edge preservation, or perceptual structure through network training. So, the generator can be interpreted as learning an implicit minimizer of an inverse problem of the form
\[
\min_u
\Big(
\|u - u_{\text{gt}}\|^2
+ \lambda \mathcal{R}(u)
\Big)
\]

Although effective, this reconstruction is produced by a single learned mapping and, therefore, doesn't inherit the stability and comparison principles available for evolution equations. So, our goal is not to replace the neural reconstructor, but to add a mathematically controlled PDE evolution that restores the scale-space and viscosity-solution structure developed above. The final reconstruction will be obtained by composing the learned operator with a nonlinear diffusion semigroup for improved CT image reconstruction, evaluated via AUC metrics. 

Although the generator $G_\theta$ is implemented as a neural network, the training objective has the form of a classical fidelity-plus-regularity energy. In deterministic inverse problems, such energies are known to produce nonlinear diffusion equations through their Euler-Lagrange optimality conditions. The purpose of this section is to show that the nonlinear diffusion operator introduced isn't arbitrary, but rather the natural first-order optimality condition associated with a reconstruction energy of fidelity-plus-regularity type.

\begin{prop}
The Euler-Lagrange equation of $E[u] = \frac{1}{2}\int_\Omega(u-s)^2\,dx + \int_\Omega\sqrt{1+|\nabla u|^2}\,dx$ is \(-\operatorname{div}\!\left(\frac{\nabla u}{\sqrt{1+|\nabla u|^2}}\right) + (u-s) = 0 \quad \text{in } \Omega\)
\end{prop}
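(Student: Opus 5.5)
The plan is to compute the first variation of $E$ directly. I assume $s\in L^2(\Omega)$ and $u\in C^2(\Omega)$, or at least $u\in H^1(\Omega)$ with enough regularity for the final integration by parts. Variations are taken in directions $\varphi\in C_c^\infty(\Omega)$. Since these vanish on $\partial\Omega$, no boundary terms arise; the natural boundary condition $\frac{\nabla u}{\sqrt{1+|\nabla u|^2}}\cdot\nu=0$ would only matter if we allowed general variations. The argument then has three steps.

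\textbf{Step 1 (Gateaux derivative).} Set $i(\tau):=E[u+\tau\varphi]$ and compute $i'(0)$. The fidelity term gives $\int_\Omega (u-s)\varphi\,dx$. For the regularity term, put $L(p)=\sqrt{1+|p|^2}$, so that $\nabla_p L(p)=p/\sqrt{1+|p|^2}$. This gives
\[
\frac{d}{d\tau}\Big|_{\tau=0}\int_\Omega L(\nabla u+\tau\nabla\varphi)\,dx=\int_\Omega \frac{\nabla u\cdot\nabla\varphi}{\sqrt{1+|\nabla u|^2}}\,dx .
\]
Differentiating under the integral is justified by dominated convergence. The key bound is $|\nabla_p L|\le 1$, so the difference quotients are dominated by $|\nabla\varphi|$ via the mean value theorem. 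This step is the only genuinely analytic point, but it is easy here because $L$ is globally Lipschitz.

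\textbf{Step 2 (integration by parts).} Write $A(x):=\nabla u/\sqrt{1+|\nabla u|^2}$ and apply the identity from Brezis quoted in Section~3:
\[
\int_\Omega A\cdot\nabla\varphi\,dx=-\int_\Omega(\operatorname{div}A)\varphi\,dx .
\]
This requires $A\in C^1$, which holds for $u\in C^2$. Combining with Step 1 gives
\[
0=i'(0)=\int_\Omega\Big(-\operatorname{div}A+(u-s)\Big)\varphi\,dx
\]
for all $\varphi\in C_c^\infty(\Omega)$. The left-hand side vanishes because $u$ is a critical point, for instance a minimizer; such a minimizer exists and is unique by strict convexity of $E$.

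\textbf{Step 3 (fundamental lemma).} The fundamental lemma of the calculus of variations then gives $-\operatorname{div}\big(\nabla u/\sqrt{1+|\nabla u|^2}\big)+(u-s)=0$ a.e. in $\Omega$, and pointwise when $s$ is continuous.

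\textbf{Main obstacle.} The main difficulty is regularity. If $u$ is only in $H^1$ (or $BV$), the strong form must be read distributionally: Step 2 then holds as the definition of $\operatorname{div}A$ in $\mathcal{D}'(\Omega)$, and the equation holds in the weak sense. I would therefore state the classical form under the standing assumption $u\in C^2(\Omega)$ and remark that the weak form is what the computation yields in general.
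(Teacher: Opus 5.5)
Your proposal is correct and follows the same route as the paper: perturb along $u+\varepsilon\varphi$ with $\varphi\in C_c^\infty(\Omega)$, differentiate the fidelity term and the area term (via the chain rule for $\sqrt{1+|p|^2}$), integrate by parts with the boundary term removed by the compact support of $\varphi$, and conclude from $J'(0)=0$ for all $\varphi$; you additionally make explicit the dominated-convergence justification (via $|\nabla_p L|\le 1$) and the fundamental lemma, which the paper leaves implicit. One aside should be dropped, although the derivation does not depend on it: strict convexity gives uniqueness, not existence, and because the integrand has linear growth the direct method only yields a minimizer of the relaxed functional in $BV(\Omega)$, which jumps when $s$ has a large enough step, so a $C^2$ (or even $W^{1,1}$) minimizer need not exist.
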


\begin{proof}
Consider the energy functional
\[
E[u]
=
\frac12 \int_\Omega (u - s)^2\,dx
+
\int_\Omega \sqrt{1 + |\nabla u|^2}\,dx,
 \Omega = (0,1)^2
\]

The first term is a data fidelity term, enforcing agreement with the
observed image $s$, while the second term is a regularization term
allowing for spatial smoothness while preserving edges. Suppose $u$ is a minimizer of $E$. Then for every test function
$\varphi \in C_c^\infty(\Omega)$, consider perturbation \(u_\varepsilon = u + \varepsilon \varphi\). Let \(J(\varepsilon) := E[u_\varepsilon]\). Since $u$ is a minimizer, then \(J'(0) = 0\). The fidelity term is
\[
E_1[u] = \frac12 \int_\Omega (u - s)^2 dx
\]

For the shifted function,
\[
E_1[u_\varepsilon]
=
\frac12 \int_\Omega (u + \varepsilon \varphi - s)^2 dx
\]

\[
\frac{d}{d\varepsilon}E_1[u_\varepsilon]
=
\int_\Omega (u + \varepsilon \varphi - s)\,\varphi\,dx
\]

\[
\left.\frac{d}{d\varepsilon}E_1[u_\varepsilon]\right|_{\varepsilon=0}
=
\int_\Omega (u - s)\varphi\,dx
\]

The regularization term is
\[
E_2[u] = \int_\Omega \sqrt{1 + |\nabla u|^2}\,dx
\]

Again, for the shifted function,
\[
E_2[u_\varepsilon]
=
\int_\Omega \sqrt{1 + |\nabla u + \varepsilon \nabla\varphi|^2}\,dx
\]

Let \(h(\varepsilon) := 1 + |\nabla u + \varepsilon \nabla\varphi|^2\) \(\rightarrow\) \(E_2[u_\varepsilon] = \int_\Omega \sqrt{h(\varepsilon)}\,dx\). Using chain rule,
\[
\frac{d}{d\varepsilon}\sqrt{h(\varepsilon)}
=
\frac{1}{2\sqrt{h(\varepsilon)}} h'(\varepsilon)
\]

\(h'(\varepsilon)
=
2(\nabla u + \varepsilon \nabla\varphi)\cdot\nabla\varphi\) \(\rightarrow\) \(\frac{d}{d\varepsilon}\sqrt{h(\varepsilon)}
=
\frac{(\nabla u + \varepsilon \nabla\varphi)\cdot\nabla\varphi}
{\sqrt{1+|\nabla u + \varepsilon\nabla\varphi|^2}}\)

So, at $\varepsilon = 0$
\[
\left.\frac{d}{d\varepsilon}E_2[u_\varepsilon]\right|_{\varepsilon=0}
=
\int_\Omega
\frac{\nabla u\cdot\nabla\varphi}{\sqrt{1+|\nabla u|^2}}
\,dx
\]

Now, define vector field \(a(x) = \frac{\nabla u}{\sqrt{1+|\nabla u|^2}}\) such that \(\int_\Omega \frac{\nabla u\cdot\nabla\varphi}{\sqrt{1+|\nabla u|^2}} dx
=
\int_\Omega a \cdot \nabla\varphi\,dx\). Recall the divergence theorem: if $F$ is a smooth vector field on a bounded
domain $\Omega \subset \mathbb{R}^n$ with outward unit normal $n$, then
\[
\int_\Omega \operatorname{div} F \, dx
=
\int_{\partial\Omega} F\cdot n \, dS
\]

Applying this to $F = \varphi a$
\[
\operatorname{div}(\varphi a)
=
\nabla\varphi \cdot a + \varphi\,\operatorname{div} a
\]

\[
\int_\Omega \nabla\varphi \cdot a \, dx
+
\int_\Omega \varphi\,\operatorname{div} a \, dx
=
\int_{\partial\Omega} \varphi\, a\cdot n \, dS\]

Since the test function $\varphi \in C_c^\infty(\Omega)$ has compact support
inside $\Omega$, $\varphi=0$ on $\partial\Omega$ and the boundary term vanishes. This gives the integration-by-parts identity
\[
\int_\Omega a\cdot\nabla\varphi \, dx
=
- \int_\Omega (\operatorname{div} a)\,\varphi \, dx =
- \int_\Omega \operatorname{div}(a)\,\varphi\,dx
\]

Taking this with the definition of \(a(x)\) above
\[
\left.\frac{d}{d\varepsilon}E_2[u_\varepsilon]\right|_{\varepsilon=0}
=
- \int_\Omega
\operatorname{div}\!\left(
\frac{\nabla u}{\sqrt{1+|\nabla u|^2}}
\right)\varphi\,dx
\]

Finally, combining both derivatives,
\[
J'(0)
=
\int_\Omega (u - s)\varphi\,dx
-
\int_\Omega
\operatorname{div}\!\left(
\frac{\nabla u}{\sqrt{1+|\nabla u|^2}}
\right)\varphi\,dx
\]

Since $J'(0)=0$ for all $\varphi \in C_c^\infty(\Omega)$,
this gives the Euler-Lagrange equation
\[
- \operatorname{div}\!\left(
\frac{\nabla u}{\sqrt{1+|\nabla u|^2}}
\right)
+ (u - s)
= 0
 \text{in } \Omega
\]

Since this is a nonlinear diffusion equation, it shows that minimizing the reconstruction energy is equivalent to solving a nonlinear PDE driven by curvature-dependent diffusion.
\end{proof}

This variational derivation bridges the inverse–problem viewpoint and the nonlinear diffusion framework, showing that the diffusion equation stems from the continuous-time limit of fidelity–plus–regularity energy minimization.

\subsection{The Diffusion Trajectory as a Regularization Path}

The variational derivation of Proposition 3.3 shows that nonlinear diffusion equations of this type can be interpreted, at the continuous level, as gradient flows of fidelity-plus-regularity energies under appropriate structural assumptions. Rather than focusing solely on stationary points of such energies, it is natural to study the full evolution generated by the diffusion process.

To this end, consider the family of solutions
\[
\{u(\cdot, t)\}_{t \geq 0},
\]
parameterized by the scale variable $t$. This one-parameter family defines a trajectory in the function space $X = L^\infty(\Omega)$, with initial condition $u(\cdot, 0) = G_\theta(u_{\mathrm{in}})$ supplied by the learned reconstructor. This trajectory will be referred to as the \emph{regularization path} of the hybrid operator.

\begin{prop}[Monotone energy decay along the path (formal)]
Let $u(x,t)$ be a sufficiently smooth solution of
\[
\partial_t u + F(x, u, \nabla u, \nabla^2 u) = 0
\]
and suppose that $F$ is derived from the first variation of an energy functional
\[
E[u] = \frac{1}{2}\int_\Omega (u - s)^2\,dx + \int_\Omega \phi(|\nabla u|)\,dx
\]
with $\phi$ convex. Then, formally,
\[
\frac{d}{dt} E[u(\cdot, t)] \leq 0,
\]
so the energy is non-increasing along the trajectory.
\end{prop}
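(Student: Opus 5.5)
The plan is to read "derived from the first variation" as saying that the evolution is the $L^2$ gradient flow of $E$. Concretely, we take
\[
F(x,u,\nabla u,\nabla^2 u) = \frac{\delta E}{\delta u}[u] = (u-s) - \operatorname{div}\!\left(\frac{\phi'(|\nabla u|)}{|\nabla u|}\nabla u\right),
\]
which generalizes the Euler-Lagrange computation of Proposition~3.3 (there $\phi(r)=\sqrt{1+r^2}$). The equation then reads $\partial_t u = -\frac{\delta E}{\delta u}[u]$. The whole argument is to differentiate $E[u(\cdot,t)]$ in $t$ and recognize the result as $-\|\partial_t u\|_{L^2}^2$.

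First, I differentiate under the integral sign, which is allowed by the standing smoothness assumption. The fidelity term contributes $\int_\Omega (u-s)\,\partial_t u\,dx$. For the regularizer I use the chain rule exactly as in the proof of Proposition~3.3, with $\varepsilon$ replaced by $t$ and $\varphi$ replaced by $\partial_t u$. Writing $a := \phi'(|\nabla u|)\nabla u/|\nabla u|$, this gives $\int_\Omega a\cdot\nabla(\partial_t u)\,dx$.

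Second, I integrate by parts using the divergence-theorem identity $\operatorname{div}(\varphi a)=\nabla\varphi\cdot a+\varphi\operatorname{div}a$ already displayed in that proof. This yields
\[
\int_\Omega a\cdot\nabla\partial_t u\,dx = -\int_\Omega (\operatorname{div} a)\,\partial_t u\,dx + \int_{\partial\Omega}\partial_t u\,(a\cdot n)\,dS .
\]
Combining the two contributions gives
\[
\frac{d}{dt}E[u(\cdot,t)] = \int_\Omega F\,\partial_t u\,dx + \text{boundary term} = -\int_\Omega |\partial_t u|^2\,dx + \text{boundary term}.
\]

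The main obstacle is the boundary term. Unlike in Proposition~3.3, $\partial_t u$ is not compactly supported, so the term does not vanish automatically. I would impose one of the following conditions, consistent with the earlier choices $\Omega=\mathbb{R}^n$ or periodic data in the comparison proof:
\begin{itemize}
\item homogeneous Neumann conditions $a\cdot n=0$, the natural condition for images;
\item Dirichlet data constant in time, so that $\partial_t u=0$ on $\partial\Omega$;
\item periodic boundary conditions.
\end{itemize}
Under any of these the boundary term is zero, and we get $\frac{d}{dt}E\le 0$.

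Convexity of $\phi$ is not needed for the sign. It plays a secondary role: it makes the flow degenerate parabolic, so the viscosity framework of Section~2 applies. It also makes $a$ well defined as a monotone field; for nonsmooth $\phi$ at $0$ one would use a subgradient. I would record this, and note that the "formal" qualifier covers both the regularity needed to differentiate under the integral and the possible vanishing of $|\nabla u|$.
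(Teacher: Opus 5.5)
Your proposal is correct and is essentially the paper's argument: both read the hypothesis as the $L^2$ gradient flow $\partial_t u=-\frac{\delta E}{\delta u}$ and apply the chain rule to get $\frac{d}{dt}E[u(\cdot,t)]=-\int_\Omega\bigl(\frac{\delta E}{\delta u}\bigr)^2\,dx\le 0$, and in both convexity of $\phi$ plays no role in the sign. The paper writes $\frac{d}{dt}E=\int_\Omega\frac{\delta E}{\delta u}\,\partial_t u\,dx$ directly and so silently drops the boundary term; your explicit integration by parts, together with Neumann, time-independent Dirichlet, or periodic conditions, makes precise what the paper leaves under the word ``formal''.
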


\begin{proof}
By the chain rule,
\[
\frac{d}{dt}E[u(\cdot,t)] = \int_\Omega \frac{\delta E}{\delta u} \, \partial_t u \, dx.
\]
If the evolution is given by the gradient flow
\[
\partial_t u = -\frac{\delta E}{\delta u},
\]
then
\[
\frac{d}{dt}E[u(\cdot,t)]
=
-\int_\Omega \left(\frac{\delta E}{\delta u}\right)^2 dx
\leq 0,
\]
with equality if and only if $u(\cdot,t)$ is a stationary point of $E$.
\end{proof}

This computation shows that, for diffusion equations admitting a gradient-flow interpretation, the solution trajectory is a descent curve in function space: each step along the path reduces the associated reconstruction energy. In the present setting, the Perona-Malik diffusion used in the implementation does not arise from a globally convex energy and is not uniformly parabolic. As a result, the above argument should be interpreted as a formal analogy rather than a rigorous characterization of the implemented flow.

\textbf{Bias-variance interpretation.}
The evolution can be interpreted through the competing effects of fidelity and regularity. For small $t$, the solution remains close to the initial reconstruction and retains fine-scale structure, including noise and reconstruction artifacts. For larger $t$, the diffusion increasingly suppresses spatial variation, leading to smoother outputs and potential loss of detail. The choice of a stopping time $T^*$, therefore, balances fidelity to the learned output against the stabilizing effect of the diffusion, linking the scale parameter $t$ to a model-selection problem.

\textbf{Connection to classical regularization.}
This perspective places the diffusion semigroup within the broader framework of regularization methods. In Tikhonov regularization,
\[
\min_u \|Au - b\|^2 + \lambda \mathcal{R}(u)
\]
the parameter $\lambda$ controls the strength of regularization. In the diffusion setting, the role of $\lambda$ is played implicitly by the stopping time $t$: increasing $t$ increases the effective regularization applied to the initial reconstruction. Analogous to the Morozov discrepancy principle, one may select a stopping time $T$ such that
\[
\|u(\cdot, T) - G_\theta(u_{\mathrm{in}})\|_{L^2(\Omega)} \approx \delta
\]
for a tolerance $\delta$ reflecting variability in the learned reconstruction.

\textbf{Implications for implementation.}
In the discrete scheme of Section 3.3, the parameters $K = 10$ and $\Delta t = 0.10$ determine an effective stopping time $T = K \cdot \Delta t = 1.0$. These parameters can, therefore, be interpreted as approximating a continuous stopping rule that balances fidelity and stability, even though the underlying diffusion is implemented through a finite, explicitly discretized evolution.

\subsection{First–Order Optimality for Variational Reconstruction Energies}

Many reconstruction objectives have the form \(\min_{u}\; \big( f(u) + g(u) \big)\) where $f$ is differentiable (fidelity) and $g$ may be nonsmooth (regularity), as discussed in Aubert and Kornprobst.

\begin{lem}[Subgradient optimality (see Brezis, Chapter 9)]
Let $f$ be convex and differentiable and let $g$ be convex. Then $u^\star$ minimizes $f+g$ if and only if \(0 \in \nabla f(u^\star) + \partial g(u^\star)\).
\end{lem}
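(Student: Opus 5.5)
The plan is to prove the two implications separately, working in a Hilbert space setting (say $X=L^2(\Omega)$) with $f$ convex and Fréchet differentiable and $g:X\to(-\infty,+\infty]$ convex, proper and lower semicontinuous. Throughout, $\partial g(u)=\{\xi : g(v)\ge g(u)+\langle \xi, v-u\rangle\ \forall v\}$.

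\textbf{Sufficiency.} Suppose $0\in\nabla f(u^\star)+\partial g(u^\star)$, so that $-\nabla f(u^\star)\in\partial g(u^\star)$. Convexity and differentiability of $f$ give the gradient inequality $f(v)\ge f(u^\star)+\langle\nabla f(u^\star),v-u^\star\rangle$. The subgradient inequality with $\xi=-\nabla f(u^\star)$ gives $g(v)\ge g(u^\star)-\langle\nabla f(u^\star),v-u^\star\rangle$. Adding the two, the linear terms cancel and $f(v)+g(v)\ge f(u^\star)+g(u^\star)$ for all $v$. This direction is routine.

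\textbf{Necessity.} Suppose $u^\star$ minimizes $f+g$. I would show directly that $\xi:=-\nabla f(u^\star)$ lies in $\partial g(u^\star)$, which avoids invoking the full subdifferential sum rule. Fix $v$ with $g(v)<\infty$ and set $v_s=u^\star+s(v-u^\star)$ for $s\in(0,1]$. Minimality and convexity of $g$ give
\[
0\le f(v_s)-f(u^\star)+g(v_s)-g(u^\star)\le f(v_s)-f(u^\star)+s\big(g(v)-g(u^\star)\big).
\]
Dividing by $s$ and letting $s\downarrow0$, differentiability of $f$ turns the first difference quotient into $\langle\nabla f(u^\star),v-u^\star\rangle$. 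This yields $g(v)\ge g(u^\star)+\langle-\nabla f(u^\star),v-u^\star\rangle$. For $v$ with $g(v)=+\infty$ the inequality is trivial. Hence $-\nabla f(u^\star)\in\partial g(u^\star)$, which is exactly $0\in\nabla f(u^\star)+\partial g(u^\star)$.

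\textbf{Main obstacle.} The only delicate point is the necessity direction. Two things must be checked there. First, the division by $s$ must be legitimate, which requires $g(u^\star)<\infty$; this holds because $u^\star$ minimizes a proper function. Second, the limit $s\downarrow0$ must use only the directional (Gâteaux) derivative of $f$ along $v-u^\star$, so Fréchet differentiability is more than enough. Organizing the argument this way sidesteps the Moreau–Rockafellar sum rule $\partial(f+g)=\nabla f+\partial g$. That rule would need a qualification condition, but here it is automatic because $f$ is finite and continuous everywhere. If one instead prefers to cite Brezis for the sum rule, the proof reduces to two facts: $0\in\partial(f+g)(u^\star)$ is equivalent to minimality, and $\partial f=\{\nabla f\}$ for differentiable convex $f$.
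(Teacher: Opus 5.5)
Your proof is correct, but it is organized differently from the paper's. The paper applies Fermat's rule ($u^\star$ minimizes $f+g$ iff $0\in\partial(f+g)(u^\star)$) and then proves the full sum rule $\partial(f+g)(x)=\nabla f(x)+\partial g(x)$. Its inclusion $\supseteq$ is your sufficiency argument: add the gradient inequality and the subgradient inequality.

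For the inclusion $\subseteq$, however, the paper subtracts $f(y)\ge f(x)+\nabla f(x)\cdot(y-x)$ from $f(y)+g(y)\ge f(x)+g(x)+p\cdot(y-x)$. That is not a valid operation. Reaching $g(y)\ge g(x)+(p-\nabla f(x))\cdot(y-x)$ this way would require the reverse bound $f(y)\le f(x)+\nabla f(x)\cdot(y-x)$. In fact no argument built only on the gradient inequality can work, because that inequality just says $\nabla f(x)\in\partial f(x)$. Take $f(y)=|y|$, $g=0$, $x=0$, with $0\in\partial f(0)$ playing the role of the gradient: the same reasoning would give $[-1,1]\subseteq\{0\}$.

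What is really needed is the first-order expansion of $f$ at $x$. Your difference-quotient step along $v_s=u^\star+s(v-u^\star)$ supplies exactly that, with convexity of $g$ carrying the local inequality to a global one. Run with an arbitrary $p\in\partial(f+g)(x)$ in place of $0$, the same computation proves the paper's inclusion $\partial(f+g)(x)\subseteq\nabla f(x)+\partial g(x)$.

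So the paper's route, once repaired this way, yields a reusable sum rule. Yours proves just what the lemma needs and exposes the minimal hypotheses. Necessity uses only the G\^ateaux derivative of $f$ at $u^\star$ and convexity of $g$, not convexity of $f$ or lower semicontinuity of $g$. Convexity of $f$ enters only in sufficiency.

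A small wording point: what requires $g(u^\star)<\infty$ is not the division by $s$ but avoiding $\infty-\infty$ when you subtract $g(u^\star)$.
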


\begin{proof}
A point $u^\star$ is a global minimizer of $f+g$ if and only if \(0 \in \partial (f+g)(u^\star)\). Since $f$ is convex and differentiable, \(\partial f(u^\star) = \{\nabla f(u^\star)\}\).  Recall the definition of the subdifferential:
\[
p\in\partial h(x)
\;\Longleftrightarrow\;
h(y)\ge h(x)+p\cdot (y-x) \forall y
\]

We want to show \(\partial (f+g)(x)=\nabla f(x)+\partial g(x)\). First, assume $p\in\partial (f+g)(x)$. Then, \(\forall\) $y$, \(f(y)+g(y)\ge f(x)+g(x)+p\cdot (y-x)\). Since $f$ is convex and differentiable, \(f(y)\ge f(x)+\nabla f(x)\cdot (y-x)\). Subtracting this gives \(g(y)\ge g(x)+\big(p-\nabla f(x)\big)\cdot (y-x)\) and, therefore, \(p-\nabla f(x)\in\partial g(x)\). This shows $p\in\nabla f(x)+\partial g(x)$. For the other direction, suppose $p=\nabla f(x)+q$ with $q\in\partial g(x)$. Then \(\forall\) $y$, \(f(y)\ge f(x)+\nabla f(x)\cdot (y-x), g(y)\ge g(x)+q\cdot (y-x)\). Adding them together, \(f(y)+g(y)\ge f(x)+g(x)+p\cdot (y-x)\). This shows $p\in\partial (f+g)(x)$. Therefore, for convex functions, the subdifferential sum rule shows
\[
\partial (f+g)(u^\star)
= \partial f(u^\star) + \partial g(u^\star)
= \{\nabla f(u^\star)\} + \partial g(u^\star)
\]

This shows \(0 \in \partial (f+g)(u^\star)
\Longleftrightarrow
0 \in \nabla f(u^\star) + \partial g(u^\star)\)
\end{proof}

This optimality condition is the nonsmooth analogue of the
Euler-Lagrange equation derived earlier. 

\subsection{Implementation}

The reconstruction network produces an initial estimate 
\[
u^{(0)} = G_\theta(u_{\mathrm{in}})
\]
To incorporate the scale-space and viscosity framework developed above, a classical edge-preserving diffusion model (Perona--Malik) is applied as a post-processing step. Specifically, a fixed number of explicit time steps ($K = 10$) of the equation
\[
\partial_t u \;=\; \nabla \cdot \!\big( g(|\nabla u|)\,\nabla u \big), 
\qquad
g(s) \;=\; \frac{1}{1+\bigl(\tfrac{s}{\kappa}\bigr)^2}
\]
are applied to the reconstructed image. The edge-stopping function $g$ suppresses diffusion at locations of large gradient magnitude while allowing smoothing in homogeneous regions. The refinement is applied post hoc to each reconstructed slice at evaluation time and is not used during training.

The Perona--Malik equation can be expressed within the general degenerate elliptic framework. Expanding the divergence term gives
\[
\partial_t u
=
g(|\nabla u|)\,\Delta u
+
\frac{g'(|\nabla u|)}{|\nabla u|}\,\nabla u^T \nabla^2 u\,\nabla u,
\]
which corresponds to an operator of the form
\[
F(p,X)
=
- g(|p|)\operatorname{tr}(X)
-
\frac{g'(|p|)}{|p|} p^T X p.
\]

\begin{prop}
For \(g(s)=\frac{1}{1+(s/\kappa)^2}\), the matrix
\[
A(p)=g(|p|)I+\frac{g'(|p|)}{|p|}\,p\otimes p
\]
is positive semidefinite whenever \(|p|<\kappa\). In particular, the operator is degenerate elliptic in the forward-diffusion regime.
\end{prop}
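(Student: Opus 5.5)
The plan is to diagonalize $A(p)$ explicitly and read off its eigenvalues. Fix $p\neq 0$ and write $s=|p|$ and $e=p/|p|$. Then $p\otimes p = s^2\, e\otimes e$, so
\[
A(p)=g(s)\,I + s\,g'(s)\, e\otimes e .
\]
This matrix acts as multiplication by $g(s)$ on the orthogonal complement $e^\perp$, which has dimension $N-1$. On the direction $e$ it acts as multiplication by $g(s)+s\,g'(s)$. Since $A(p)$ is symmetric, it is positive semidefinite exactly when both of these eigenvalues are nonnegative.

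The first eigenvalue is immediate: $g(s)=1/(1+s^2/\kappa^2)>0$ for every $s$. The main step is the eigenvalue in the gradient direction, which is the only place the hypothesis $|p|<\kappa$ can enter. I will set $r=s^2/\kappa^2$ and differentiate:
\[
g'(s)=-\frac{2s/\kappa^2}{(1+r)^2}, \qquad s\,g'(s)=-\frac{2r}{(1+r)^2}.
\]
This gives
\[
g(s)+s\,g'(s)=\frac{(1+r)-2r}{(1+r)^2}=\frac{1-r}{(1+r)^2}.
\]
This quantity is nonnegative precisely when $r\le 1$, that is, when $|p|\le\kappa$; in particular it is positive for $|p|<\kappa$. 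The same formula shows that the eigenvalue becomes negative for $|p|>\kappa$, which is the familiar backward-diffusion regime of Perona--Malik. I will remark on this to explain why the restriction in the statement is sharp.

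Two routine points remain. The first is the case $p=0$, where the coefficient $g'(|p|)/|p|$ is not defined as written. Since $g'(s)/s\to -2/\kappa^2$ as $s\to 0$ and $|p\otimes p|=|p|^2\to 0$, the correction term $\frac{g'(|p|)}{|p|}\,p\otimes p$ tends to zero. So $A$ extends continuously by $A(0)=g(0)I=I$, which is positive definite.

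The second point is to deduce degenerate ellipticity. Using $p^TXp=\operatorname{tr}\big((p\otimes p)X\big)$, the operator from the expansion above can be written as $F(p,X)=-\operatorname{tr}\big(A(p)X\big)$. If $X\le Y$, then $F(p,X)-F(p,Y)=\operatorname{tr}\big(A(p)(Y-X)\big)$. The trace of a product of two positive semidefinite matrices is nonnegative; to see this, write $\operatorname{tr}(AB)=\operatorname{tr}(A^{1/2}BA^{1/2})\ge 0$. Hence $F(p,X)\ge F(p,Y)$, which is the degenerate ellipticity condition stated earlier, valid on the region $|p|<\kappa$.
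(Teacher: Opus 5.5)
Your proposal is correct and follows the paper's proof: both diagonalize $A(p)$, with eigenvalue $g(|p|)$ on $e^\perp$ and eigenvalue $g(|p|)+|p|g'(|p|)=\frac{1-(|p|/\kappa)^2}{(1+(|p|/\kappa)^2)^2}$ along $e=p/|p|$, and conclude that both are nonnegative for $|p|<\kappa$. You also handle $p=0$ by continuous extension and derive degenerate ellipticity explicitly via $F(p,X)=-\operatorname{tr}(A(p)X)$ and $\operatorname{tr}\big(A(p)(Y-X)\big)\ge 0$; both are correct additions that the paper leaves implicit.
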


\begin{proof}
Let \(e = p/|p|\) for \(p \neq 0\). The matrix \(A(p)\) has eigenvalue \(g(|p|)\) in directions orthogonal to \(e\), and eigenvalue \(g(|p|) + |p| g'(|p|)\) in the direction \(e\). For \(g(s)=\frac{1}{1+(s/\kappa)^2}\),
\[
g'(s)= -\frac{2s/\kappa^2}{\left(1+(s/\kappa)^2\right)^2}
\]
\[
g(s)+s g'(s)
=
\frac{1-(s/\kappa)^2}{\left(1+(s/\kappa)^2\right)^2}
\]
Thus, \(g(s)\ge 0\) for all \(s\), and \(g(s)+s g'(s)\ge 0\) whenever \(s<\kappa\). All eigenvalues of \(A(p)\) are nonnegative when \(|p|<\kappa\), showing \(A(p)\) is positive semidefinite.
\end{proof}

This condition ensures degenerate ellipticity for the continuous model only in the forward-diffusion regime \(|p|<\kappa\), placing the equation locally within the class of PDEs for which viscosity-solution methods are relevant under appropriate assumptions. This does not amount to a global well-posedness result for the Perona-Malik flow, and the theoretical discussion here should, therefore, be interpreted as partial structural motivation rather than a complete justification of the implemented model.

The diffusion step is implemented using an explicit Euler discretization. For each iteration,
\[
u^{k+1}
=
u^{k}
+
\Delta t\,\nabla_h\cdot\!\big(g(|\nabla_h u^k|)\,\nabla_h u^k\big), k = 0,\dots,K-1
\]
where \(\nabla_h\) and \(\nabla_h\cdot\) denote finite-difference approximations of the gradient and divergence. Neumann (zero-flux) boundary conditions are imposed. All inputs are normalized to a fixed intensity range prior to processing so that the scale parameter \(\kappa\) has a consistent interpretation across samples. The resulting operator
\[
T_{\Delta t}(u)
:=
u + \Delta t\,\nabla_h\cdot\!\big(g(|\nabla_h u|)\,\nabla_h u\big)
\]
defines a discrete evolution, and the final reconstruction is
\[
u_{\mathrm{out}}
=
T_{\Delta t}^{\,K}\!\big(G_\theta(u_{\mathrm{in}})\big)
\]

While the continuous Perona-Malik equation is not uniformly parabolic and may exhibit ill-posed behavior, the discrete scheme used here is evaluated empirically rather than analyzed for numerical stability in the classical sense.

\subsection{A viscosity-semigroup interpretation of the reconstruction pipeline}

The network output \( u(0) := G_{\theta}(u_{\mathrm{in}}) \) is interpreted as an initial condition for a fixed, \(K\)-step controlled scale-space evolution. We construct the learned mapping using the order-preserving time-step operator \(T_{\Delta t}\) associated with a degenerate elliptic diffusion, and define the final reconstruction as
\[
u_{\mathrm{out}} = T_{\Delta t}^{K}\, G_{\theta}(u_{\mathrm{in}}), 
\quad K \in \mathbb{N}.
\]

\begin{thm}[Barles--Souganidis convergence of monotone schemes]
Consider a fully nonlinear, second-order PDE that is degenerate elliptic and interpreted in the viscosity sense. Suppose a numerical scheme is
\begin{itemize}
    \item monotone,
    \item stable, and
    \item consistent with the PDE.
\end{itemize}
Then the discrete solutions generated by the scheme converge uniformly, under refinement of the spatial grid and time step, to the unique viscosity solution of the continuous equation.
\end{thm}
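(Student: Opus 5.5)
The plan is the standard half-relaxed limits argument of Barles and Souganidis. The key structural input is the strong comparison principle (the Comparison Principle theorem above, and the uniqueness Corollary). Write the scheme abstractly as
\[
S\big(h,(t,x),u^h(t,x),[u^h]_{(t,x)}\big)=0,
\]
where $h=(\Delta t,\Delta x)$ and $[u^h]$ denotes the values of $u^h$ at the other stencil points. In this notation, \emph{monotone} means $S$ is non-increasing in the bracket argument. \emph{Stable} means $\sup_h\|u^h\|_\infty\le C$ independently of $h$. \emph{Consistent} means that for every smooth $\phi$,
\[
S\big(h,(t,x),\phi(t,x)+\xi,[\phi+\xi]\big)\to \partial_t\phi+F(x,\phi,\nabla\phi,\nabla^2\phi)
\]
as $h\to0$, $(t,x)\to(t_0,x_0)$ and $\xi\to0$. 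I will also assume the initial data is bounded and uniformly continuous, as in the setting of Section 2, and that boundary effects are absent ($\Omega=\mathbb{R}^n$ or periodic), exactly as in the sketch of the comparison proof.

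\textbf{Step 1 (half-relaxed limits).} Using stability, define
\[
\overline{u}(t,x)=\limsup_{h\to0,\,(s,y)\to(t,x)}u^h(s,y),\qquad
\underline{u}(t,x)=\liminf_{h\to0,\,(s,y)\to(t,x)}u^h(s,y).
\]
These are bounded, $\overline u$ is upper semicontinuous, $\underline u$ is lower semicontinuous, and $\underline u\le\overline u$ by construction.

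\textbf{Step 2 ($\overline u$ is a viscosity subsolution).} Take a $C^2$ function $\phi$ with $\overline u-\phi$ having a strict local maximum at $(t_0,x_0)$; strictness can be arranged by adding a quartic penalty. A standard compactness lemma gives points $(t_h,x_h)$ at which $u^h-\phi$ attains a local maximum, along a subsequence, with $(t_h,x_h)\to(t_0,x_0)$ and $u^h(t_h,x_h)\to\overline u(t_0,x_0)$. Set $\xi_h=u^h(t_h,x_h)-\phi(t_h,x_h)$. Near $(t_h,x_h)$ we then have $u^h\le\phi+\xi_h$. Monotonicity lets us replace $[u^h]$ by $[\phi+\xi_h]$ in the scheme, which gives
\[
S\big(h,(t_h,x_h),\phi(t_h,x_h)+\xi_h,[\phi+\xi_h]\big)\le0.
\]
Passing to the limit via consistency yields $\partial_t\phi+F\le0$ at $(t_0,x_0)$, which is the subsolution inequality in the sense of the Definition of Viscosity Subsolution. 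This needs the semicontinuous version of that definition, which I will state explicitly.

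\textbf{Step 3 (supersolution and initial data).} The symmetric argument shows that $\underline u$ is a supersolution. Separately, I will show $\overline u(0,\cdot)\le u_0\le\underline u(0,\cdot)$. This can be done either by constructing barriers $u_0(x_0)\pm\big(\omega(\delta)+C_\delta(|x-x_0|^2+t)\big)$, which monotonicity and consistency make discrete super- and subsolutions, or by using the generalized boundary condition in the viscosity sense. I expect this initial-layer step to be the main technical obstacle, because comparison is only available for ordered initial traces and the scheme gives no a priori time continuity.

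\textbf{Step 4 (conclusion).} Apply the Comparison Principle in its semicontinuous form, which the doubling-of-variables sketch above supports, to get $\overline u\le\underline u$. Hence $\overline u=\underline u=:u$ is continuous, and it is the unique viscosity solution by the uniqueness Corollary. Equality of the upper and lower half-relaxed limits is equivalent to locally uniform convergence of $u^h$ to $u$; uniformity on bounded sets follows by a standard diagonal argument. Since the whole family is forced to the same limit, no subsequence extraction is needed, and this completes the proof.
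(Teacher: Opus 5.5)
The paper does not prove this theorem. It quotes Barles--Souganidis as background and explicitly declines to verify the hypotheses for its Perona--Malik scheme. Your half-relaxed-limits outline is the standard proof, and Steps 1--2 are sound; you should normalize $\phi(t_0,x_0)=\overline u(t_0,x_0)$ so that $\xi_h\to0$, as your consistency definition requires.

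The first gap is Step 4. It needs a \emph{strong} comparison principle: every bounded upper semicontinuous subsolution lies below every bounded lower semicontinuous supersolution with ordered initial data. In Barles--Souganidis this is a separate hypothesis. The statement as written omits it, and degenerate ellipticity does not imply it. You also cannot take it from the paper's Comparison Principle, for two reasons:
\begin{enumerate}
\item That result is formulated only for continuous sub- and supersolutions.
\item Its sketch stops at $F(x_\varepsilon,u,p_\varepsilon,X_\varepsilon)-F(y_\varepsilon,v,p_\varepsilon,Y_\varepsilon)\le0$. A contradiction from there requires a strict subsolution such as $u-\eta/(T-t)$, because properness is not strict and the $a_\varepsilon$ terms cancel. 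For $x$-dependent $F$ it also requires a structure condition like $F(y,r,\alpha(x-y),Y)-F(x,r,\alpha(x-y),X)\le\omega(\alpha|x-y|^2+|x-y|)$ under the Crandall--Ishii matrix inequality. Otherwise $p_\varepsilon,X_\varepsilon,Y_\varepsilon$ may be unbounded as $\varepsilon\to0$ and continuity of $F$ gives no control.
\end{enumerate}
So state strong comparison as an explicit hypothesis. It does hold, for example, for translation-invariant $F(r,p,X)$ on a torus. There both functions share $p_\varepsilon$ and $X_\varepsilon\le Y_\varepsilon$, so properness and degenerate ellipticity contradict the strict inequality. With that hypothesis Step 4 is correct and even yields existence. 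The conclusion is locally uniform convergence, which is global only in the periodic case.

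The second gap is Step 3, which you leave open. Your barrier option does not follow from the stated hypotheses. Concluding $u^h\le u_0(x_0)+\omega(\delta)+C_\delta|x-x_0|^2+Kt$ needs a discrete comparison principle. That holds by induction for explicit schemes $u^{n+1}=G(u^n)$ with $G$ monotone, but not for an abstract monotone scheme. It also needs the barrier to be a discrete supersolution on the whole domain. Consistency is only local, and the barrier fails to be a supersolution far from $x_0$ when $F$ grows in $p$. In addition, $K$ must be chosen separately from $C_\delta$.

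The route that works is the generalized initial condition plus a reduction lemma:
\begin{enumerate}
\item Show that $\overline u$ satisfies $\min\{\partial_t\phi+F,\ \overline u-u_0\}\le0$ at $t=0$. Maximum points of $u^h-\phi$ tending to $(0,x_0)$ lie either on the initial layer, where $u^h=u_0$ if the scheme is initialized with $u_0$, or at positive times, where your Step 2 applies.
\item Test with $\phi=|x-x_0|^2/\varepsilon+C_\varepsilon t$. At a maximum point $(t_\varepsilon,x_\varepsilon)$ of $\overline u-\phi$ one has $|x_\varepsilon-x_0|^2\le2\varepsilon\sup|\overline u|$, so for fixed $\varepsilon$ the arguments of $F$ lie in a compact set.
\item Take $C_\varepsilon$ larger than $\sup|F|$ on that set. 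This rules out the equation alternative whether $t_\varepsilon>0$ or $t_\varepsilon=0$. Hence $t_\varepsilon=0$ and $\overline u(0,x_0)\le u_0(x_\varepsilon)\le u_0(x_0)+\omega_{u_0}(\sqrt{2\varepsilon\sup|\overline u|})$.
\item Let $\varepsilon\to0$ to get $\overline u(0,\cdot)\le u_0$; symmetrically $\underline u(0,\cdot)\ge u_0$.
\end{enumerate}
Only then can a comparison principle with pointwise initial ordering be applied.
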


This result provides a useful point of reference for numerical schemes designed for viscosity solutions. However, since monotonicity, consistency, and stability are not fully verified for the present discretization, the Barles-Souganidis theorem is cited here only as background rather than as a convergence result established for this implementation. In this thesis, it mainly serves to motivate the use of a short, fixed-length diffusion evolution as a structured refinement step for learned reconstructions, rather than as a theorem directly guaranteeing the observed empirical behavior.

The refinement is applied with parameters \(K = 10\), \(\Delta t = 0.10\), and \(\kappa = 0.08\). These parameters were selected to balance visible smoothing against edge preservation in the reported experiments. Because the timestep does not satisfy the classical CFL-type bound discussed later, the observed numerical behavior should be interpreted as an empirical property of this implementation rather than as a consequence of a proved stability theorem.

\section{Exploratory Testbed: Mesothelioma Subtype Classification on CT}

We compare the baseline generator with a hybrid reconstruction pipeline in which the network output is post-processed by a nonlinear diffusion step motivated by the viscosity-semigroup framework described above. Performance is evaluated by computing AUC on a fixed test set at each epoch, using the corresponding generator checkpoint. The goal of this section is exploratory: to assess whether the proposed post-processing is associated with improved empirical stability and test performance in this dataset.

\vspace{3mm}

\textbf{Baseline behavior.}
The baseline generator exhibits substantial variability across epochs. The observed AUC values range from approximately \(0.49 \;\; \text{to} \;\; 0.80\) with no clear convergence trend. While occasional peaks are achieved (e.g., AUC $\approx 0.796$ at epoch 22), these are not sustained, and the overall trajectory remains highly unstable. 

\begin{figure}[H]
    \centering
    \includegraphics[width=0.75\linewidth]{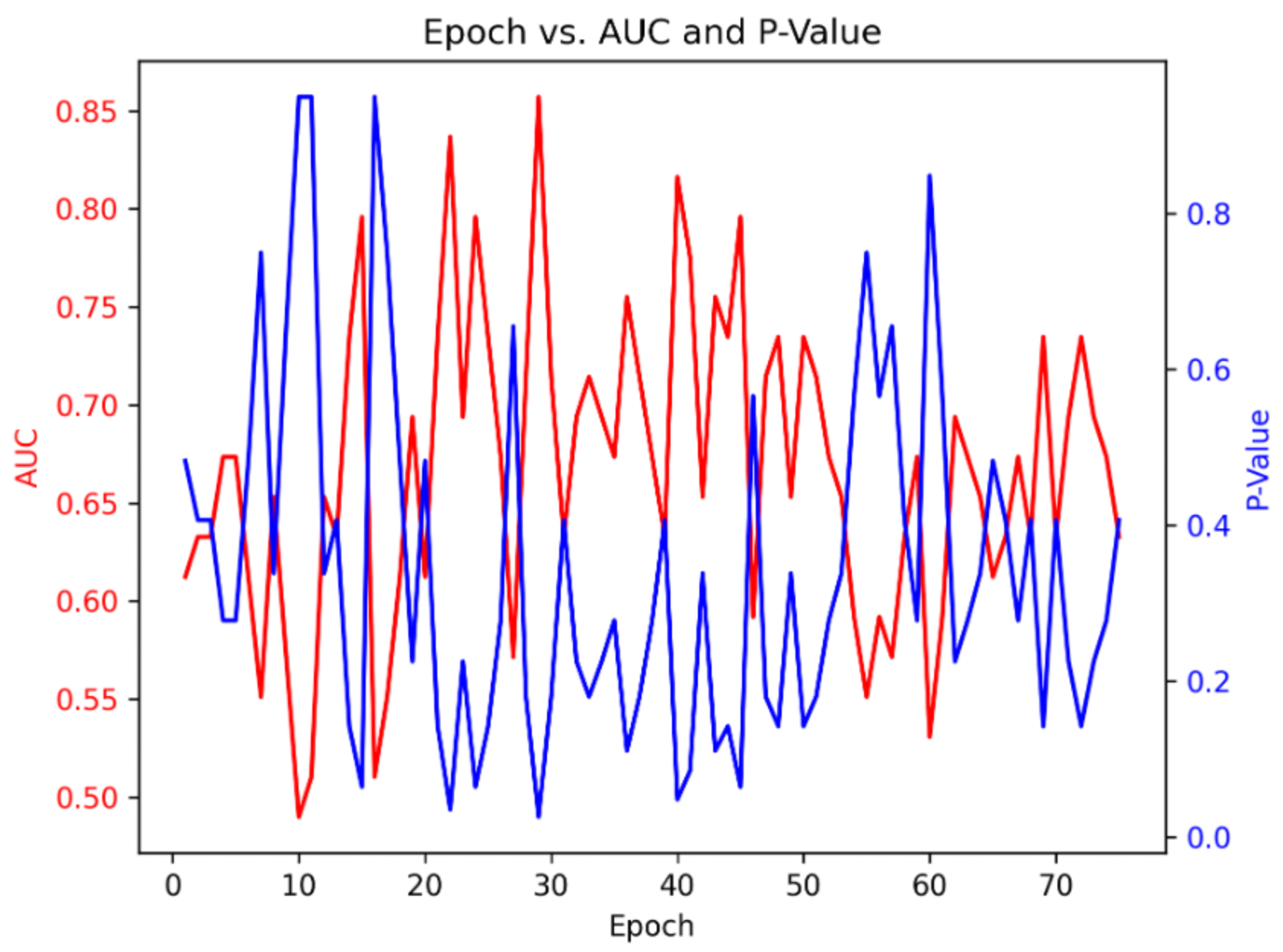}
    \label{fig:placeholder}
\end{figure}

\vspace{3mm}
\textbf{Viscosity-refined behavior.}
After introducing the nonlinear diffusion refinement, the observed performance profile changes markedly. Across all 80 epochs, the reported AUC is \(\text{AUC}_{\text{viscosity}} = 0.875\) with negligible variation.

\begin{figure}[H]
    \centering
    \includegraphics[width=0.75\linewidth]{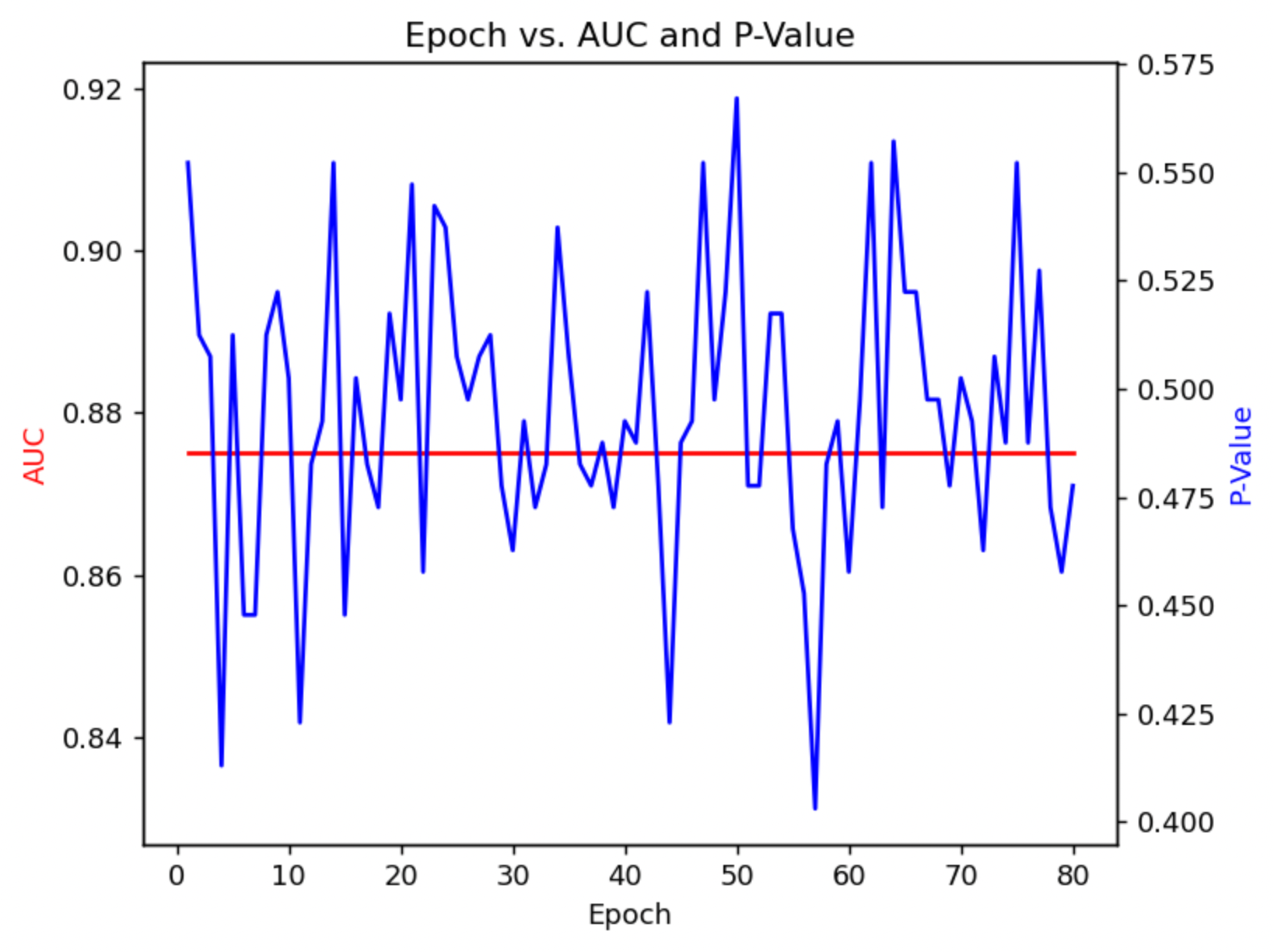}
    \label{fig:placeholder}
\end{figure}

\textbf{Performance comparison.}
Relative to the best observed baseline epoch, the refined pipeline shows a quantitative improvement: \(\Delta \text{AUC} = 0.875 - 0.7959 = 0.0791\), corresponding to a relative increase of approximately \(\frac{0.0791}{0.7959} \approx 9.9\%\); the reported epoch-to-epoch variability is also substantially smaller after refinement.

\textbf{Interpretation.}
The hybrid operator produces 
$u_{\mathrm{out}} = T_{\Delta t}^{\,K}(G_\theta(u_{\mathrm{in}}))$, where 
$T_{\Delta t}^{\,K}$ is a finite composition of explicit Euler steps 
motivated by the viscosity formulation of the Perona--Malik equation. 
By Proposition~2.11, the continuous semigroup $T_t$ is non-expansive in 
$L^\infty(\Omega)$. Under these operating conditions, the diffusion stage may reduce the effect of checkpoint-to-checkpoint variation in the outputs \(G_\theta(u_{\mathrm{in}})\). Note, however, that \(T_{\Delta t}^{\,K}\) is not the exact continuous semigroup, and this thesis does not prove that the discrete implementation inherits comparison or non-expansiveness in the precise sense available for the continuous PDE. 

The diffusion step, therefore, provides a structured, non-learned transformation of the generator output that is empirically associated with both improved AUC and reduced epoch-to-epoch variability in this dataset. While this behavior is consistent with the non-expansive character of the underlying continuous semigroup, the present results do not isolate a causal mechanism or rule out simpler smoothing effects. 

\subsection{Bootstrap Analysis of AUC Stability}

To quantify both the improvement in performance and the reduction in variability, a bootstrap analysis was performed on the epoch-wise AUC values. Let $\{a_1, \dots, a_{75}\}$ denote the baseline AUC values across training epochs.

\textbf{Bootstrap procedure.}
We generate $B$ bootstrap samples by sampling with replacement:
\[
\{a_1^{(b)}, \dots, a_{75}^{(b)}\}, \quad b = 1, \dots, B
\]
and compute the bootstrap mean
\[
\bar{a}^{(b)} = \frac{1}{75} \sum_{i=1}^{75} a_i^{(b)}
\]
The empirical distribution of $\{\bar{a}^{(b)}\}_{b=1}^B$ provides an estimate of the sampling distribution of the mean AUC.

\textbf{Baseline statistics.}
From the observed data,
\[
\bar{a}_{\text{baseline}} \approx 0.622, \min a_i = 0.490, \max a_i = 0.796
\]
The large spread indicates substantial variability across epochs. Let $\hat{\sigma}_{\text{baseline}}^2$ denote the sample variance:
\[
\hat{\sigma}_{\text{baseline}}^2
=
\frac{1}{74}
\sum_{i=1}^{75}
(a_i - \bar{a}_{\text{baseline}})^2
\]

A $95\%$ bootstrap confidence interval for the mean AUC is
\[
\left[
\bar{a}^{(0.025)}, \bar{a}^{(0.975)}
\right]
\approx [0.606,\;0.638]
\]

\textbf{Viscosity-refined statistics.}
After viscosity diffusion, the AUC is constant across epochs:
\[
a_i^{\text{viscosity}} = 0.875, \quad \forall i
\]
\[
\bar{a}_{\text{viscosity}} = 0.875,
\hat{\sigma}_{\text{viscosity}}^2 = 0
\]

Under bootstrap resampling, every sample is identical, so \(\bar{a}^{(b)}_{\text{viscosity}} = 0.875 \forall b\) and the confidence interval collapses to a point: \([0.875, \; 0.875]\).

\textbf{Comparison.}
The bootstrap analysis highlights two effects:

\begin{enumerate}
\item{Shift in mean performance:}
\[
\Delta \bar{a} = 0.875 - 0.622 \approx 0.253
\]

\item{Collapse of variability:}
\[
\hat{\sigma}_{\text{baseline}}^2 > 0
\quad \longrightarrow \quad
\hat{\sigma}_{\text{viscosity}}^2 = 0
\]
\end{enumerate}

The confidence intervals are disjoint, indicating that the observed improvement
in AUC is large relative to the baseline variability across epochs.

\textbf{Interpretation.}
The diffusion operator does not just improve expected performance, but also affects the distribution of outcomes; the mapping from input to reconstruction transitions from a high-variance estimator to a deterministic operator at the level of downstream metrics. This behavior is consistent with the contraction properties established earlier: the diffusion semigroup appears to suppress perturbations in the reconstructed image, suggesting that variability in the generator is not reflected in the final classification output.

\subsection{Conclusion.}

The results support the central claim of this work: incorporating a mathematically grounded diffusion stage into a learned reconstruction pipeline affects both the accuracy and the stability of downstream performance. While the baseline generator achieves moderate peak performance, it exhibits significant variability across training epochs, indicating sensitivity to training dynamics and a lack of structural control. In contrast, the hybrid operator produces uniformly high AUC values across all epochs.

This combination of higher AUC and near-zero variance is the key finding. It reflects not just an improvement in peak performance, but a shift from an unstable mapping to a controlled operator. Consistent with the theoretical framework, the diffusion semigroup appears to suppress perturbations introduced by the learned reconstruction, reducing sensitivity to the specific generator checkpoint and stabilizing downstream performance. The post-processing step is consistent with a non-expansive evolution, reducing variability while preserving diagnostically relevant structure.

Though there are clear limitations to the experimental setting, as discussed below, the results nonetheless demonstrate the value of integrating mathematical structure with data-driven models since the improvement observed here does not arise from increased architectural complexity or additional training, but from enforcing a mathematically well-posed evolution on the learned reconstruction.

\subsubsection{Limitations and Future Directions}

Several limitations should be acknowledged. First, the theoretical stability
estimate of Proposition~3.2 bounds perturbation amplification through the
hybrid operator in terms of the Lipschitz constant $L_\theta$ of the learned
map $G_\theta$, but this constant is never measured or controlled in our
implementation. The empirical evaluation measures downstream AUC rather than
input perturbation amplification directly, so the two are not immediately
comparable. The AUC improvement and elimination of epoch-to-epoch variability
are consistent with the contraction property of $T_{\Delta t}^K$, but a direct
numerical verification of the Lipschitz bound remains a direction for future work.

Second, the Perona--Malik diffusion model is not uniformly parabolic and is known to exhibit ill-posed behavior in the continuous setting. While the discretized scheme used here is stable in practice for small timesteps and limited iterations, the theoretical guarantees of viscosity solution theory do not directly apply without additional regularization. 

Third, the dataset of 66 patients is small for a deep learning pipeline, and
the stability of AUC at $0.875$ may partly reflect the low variance of the
diffusion post-processor rather than a genuine improvement in discriminative
power. A larger cohort would allow for more reliable cross-validation and external test set
evaluation.

Fourth, the diffusion parameters ($K = 10$, $\Delta t = 0.10$, $\kappa = 0.08$)
were fixed by manual tuning rather than optimized jointly with the generator.
Learning these parameters end-to-end, subject to the constraint that the
resulting operator remain monotone and non-expansive, is a natural extension.
One principled approach would be to parametrize $\kappa$ and $\Delta t$ as
learnable scalars and enforce the CFL stability condition as a differentiable
constraint during training.

Fifth, the comparison is limited to a single baseline. A more complete
evaluation would include comparisons against other stabilization strategies (spectral normalization, Tikhonov regularization, total variation post-processing, or plug-and-play priors) to isolate how much of the observed gain is attributable to the viscosity structure specifically, as opposed to any form of post-hoc smoothing.

\section*{Acknowledgments}
I would like to thank my advisor, Jiaqi Li, for her guidance and support throughout this project, as well as Christopher Valdes and Samuel Armato III for their collaboration on the CT reconstruction and detection framework, developed in the Armato Laboratory at The University of Chicago.

\newpage

This appendix contains supplementary experiments analyzing how representations evolve in learned models trained via stochastic optimization. These experiments are not part of the CT reconstruction pipeline itself, but provide empirical motivation for the central modeling choice of this work since they illustrate that learned mappings can exhibit variability and sensitivity across training iterations, even when overall task performance appears stable. This behavior illustrates an important limitation: standard neural network pipelines do not enforce control over the propagation of perturbations through intermediate representations. The instability observed here motivates the introduction of the diffusion-based operator in the main text, which enforces a controlled, non-expansive evolution. The final appendix provides the implementation details and code used in the CT experiments.

\appendix

\section{Statistical Considerations for Stochastic Training}

The learned components in this thesis, including the feedforward, reinforcement-learning, and CT reconstruction networks, are trained using stochastic optimization procedures, most notably stochastic gradient descent (SGD). As a result, estimates of model parameters and derived quantities such as mutual information proxies, accuracies, and AUC values depend on optimization randomness and finite-sample variation.

A framework for uncertainty quantification in stochastic optimization, following Zhu et al., provides a statistical basis for interpreting these results. Formally, the parameter of interest is the minimizer of the population loss:
\[x^* = \arg\min_{x \in \mathbb{R}^d} F(x), 
\quad \text{where } 
F(x) = \mathbb{E}_{\xi \sim \Pi}[f(x, \xi)]\]
In practice, this minimizer cannot be computed directly due to data scale and stochasticity. Instead, SGD produces a sequence of iterates \(x_1, x_2, \ldots\), each formed using noisy gradient estimates. These iterates converge toward \(x^*\), but exhibit variability due to sampling noise. Beyond point estimation, the goal is to quantify uncertainty in functionals of the solution, such as \(\nu^\top x^*\), through confidence intervals satisfying
\[
P(\nu^\top x^* \in \widehat{CI}) \approx 1-\alpha
\]

At high confidence levels, even small deviations from nominal coverage become significant. To measure this, Zhu et al. introduce the relative error
\[\Delta_\alpha = \left| \frac{P(\nu^\top x^* \notin \widehat{CI})}{\alpha} - 1 \right|\]
This quantity compares the actual error rate with the target level \(\alpha\). When \(\Delta_\alpha \approx 0\), the interval achieves correct coverage. Controlling this error is particularly important in settings involving multiple comparisons, where small miscalibrations accumulate. To construct confidence intervals, consider the averaged SGD (ASGD) estimator
\[
\bar x_n = \frac{1}{n}\sum_{i=1}^n x_i.
\]
The Polyak–Ruppert theorem establishes the asymptotic normality
\[
\sqrt{n}(\bar{x}_n - x^*) \Rightarrow \mathcal{N}(0, \Sigma)
\]
which implies that \(\bar x_n\) behaves approximately as a Gaussian estimator for large \(n\). This leads to confidence intervals of the form
\[
\nu^\top \bar x_n \pm z_{1-\alpha/2}\sqrt{\nu^\top \Sigma \nu / n}
\]

However, the covariance \(\Sigma = A^{-1} S A^{-1}\) is unknown in practice. Direct estimation is computationally expensive or unstable in high-confidence regimes. To address this, Zhu et al. propose a parallelized approach based on multiple independent runs. Specifically, \(K\) independent SGD trajectories are computed:
\[\hat x^{(k)}_i = h_i(\xi^{(k)}_i, \mathcal{F}^{(k)}_{i-1}), \quad k=1,\dots,K\] and averaged within each run. The final estimator is
\[\bar x_{K,n} = \frac{1}{K}\sum_{k=1}^K \hat x^{(k)}_n\]
The between-run variance is
\[\hat\sigma_\nu^2 = \frac{1}{K-1}\sum_{k=1}^K \big(\nu^\top \hat x^{(k)}_n - \nu^\top \bar x_{K,n}\big)^2\]
This leads to the studentized statistic
\[\hat t_\nu = \frac{\sqrt{K}\,(\nu^\top \bar x_{K,n} - \nu^\top x^*)}{\hat\sigma_\nu}\]
And, therefore, the confidence interval
\[\widehat{CI}_\nu = 
\Big[\nu^\top \bar x_{K,n} \ \pm\ t_{1-\alpha/2,\,K-1}\,\hat\sigma_\nu/\sqrt{K}\Big]\]

Because the runs are independent, standard $t$-distribution theory applies, avoiding the need for Hessian estimation or resampling. This method is computationally efficient, effective even for small $K$, and provides finite-sample guarantees. Instead of relying solely on asymptotic coverage,
\[
P(\nu^\top x^* \in \widehat{CI}) \to 1-\alpha
\]
the analysis controls relative error uniformly:
\[
\Delta_N := \sup_{\alpha(N)\le \alpha < 1} \left|\frac{P(\nu^\top x^* \in \widehat{CI}) - (1-\alpha)}{\alpha}\right| \to 0
\]

Under regularity assumptions, the approximation error satisfies
\[
\mathbb{E}\|W_n - Z_n\|^2 \ \lesssim \ \max\!\left\{n^{1-2\beta},\ \frac{\log n}{n^{1-2/q}},\ \frac{\|x_0 - x^{*}\|^2}{n}\right\}
\]
and the studentized statistic converges at rate
\[
\sup_{z} \big|P(\hat t_\nu \ge z) - P(t_{K-1} \ge z)\big| \ \lesssim\ \delta(N/K)^{1/4}
\]

These results provide a statistical foundation for interpreting the variability observed in stochastic training. In particular, they show that variability across training runs and epochs is an inherent feature of stochastic optimization, rather than an artifact of implementation. 

\section{Information Flow in Feedforward Networks}

Feedforward architectures provide a controlled setting in which representation dynamics can be measured directly. The objective is not to optimize benchmark performance, but to characterize how information is preserved, compressed, and reorganized across layers in the absence of explicit stability constraints.

Experiments are conducted on the MNIST dataset, consisting of $70{,}000$ grayscale images of size $28 \times 28$. Each image is normalized to $[0,1]$ and models are trained using stochastic gradient descent (SGD). At initialization, predictions are effectively random. Each SGD iteration processes a mini-batch, computes the loss, and updates parameters. Over time, this produces a sequence of representations that progressively encode task-relevant structure. Three model classes are considered:

\subsubsection{Linear Model}

A linear classifier trained with SGD (learning rate $0.1$, $5$ epochs) achieves a final test accuracy of $91.62\%$. The training loss decreases from $0.3745$ to $0.2789$, with stable convergence and minimal variance across epochs. The model quickly reaches its capacity, reflected in early plateauing of the loss.

\subsubsection{Multi-Layer Perceptron (MLP)}

An MLP trained under the same conditions achieves $97.62\%$ test accuracy. The loss decreases from $0.3189$ to $0.0667$, with rapid improvement in early epochs followed by gradual refinement. This reflects the network’s ability to extract higher-level features beyond linear separability.

\subsubsection{Convolutional Neural Network (CNN)}

The CNN achieves $98.93\%$ test accuracy, with loss decreasing from $0.2764$ to $0.0264$. The improved performance reflects hierarchical feature extraction, where spatial structure is progressively encoded across layers.

\subsection{Measuring Information Flow}

To quantify representation dynamics, I use empirical proxies for mutual information to measure how each layer represents the input $X$ and the task label $Y$:
\[
\widehat I(X; T_l), \quad \widehat I(Y; T_l)
\]

Here, $\widehat I(X;T_l)$ is interpreted as a proxy for input preservation, while $\widehat I(Y;T_l)$ is interpreted as a proxy for the accessibility of task-relevant structure under the chosen estimator. These quantities should not be read as exact mutual information values. Activations are recorded across layers:
\[
X \rightarrow T_1 \rightarrow T_2 \rightarrow \cdots \rightarrow T_L
\]

Mutual information is defined as
\[
I(U; V) = \sum_{u,v} p(u,v)\log \frac{p(u,v)}{p(u)p(v)}
\]

However, exact mutual information is not computed here. In a deterministic feedforward network, the data processing inequality implies that true mutual information cannot increase as information passes through successive layers:

\[
I(Y;T_{l+1}) \leq I(Y;T_l)
\]

Therefore, any apparent increase in the plotted quantity should be interpreted as an increase in the estimated or proxy mutual information, not as an actual increase in true information. In this appendix, increases in $\widehat I(Y;T_l)$ indicate that task-relevant structure has become more accessible to the classifier after the network reorganizes the representation. Thus, the plots are best understood as diagnostics of representation organization rather than as theorems about exact information flow.

\subsection{Information Flow Analysis}

For the linear model, only a single layer is present. $\widehat I(X; T_l)$ is low, indicating early compression of input structure under the proxy estimator, while $\widehat I(Y; T_l)$ increases modestly, suggesting that what little task-relevant structure exists becomes more accessible to the estimator. This reflects limited representational capacity, not an increase in true mutual information.

\begin{figure}[H]
    \centering
    \includegraphics[width=0.75\linewidth]{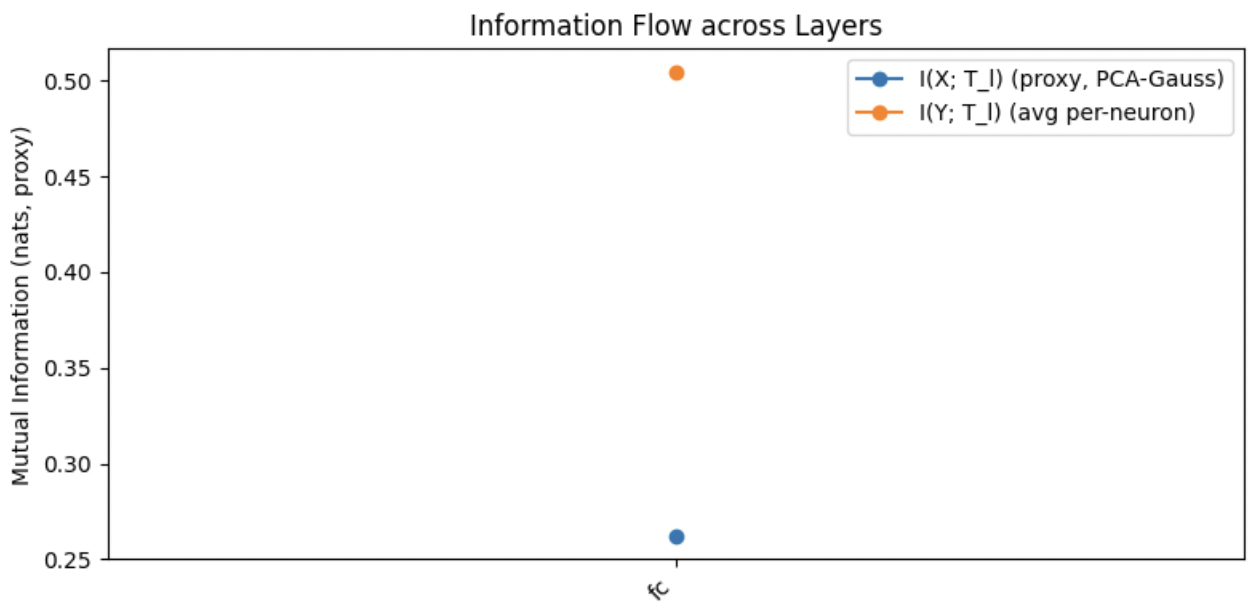}
    \caption{Information flow across layers for the linear model.}
    \label{fig:placeholder}
\end{figure}

For the MLP, $\widehat I(X; T_l)$ remains relatively low and stable, while $\widehat I(Y; T_l)$ increases across layers. This should be interpreted as the task-relevant structure becoming more accessible under the chosen proxy, rather than as an increase in true mutual information.

\begin{figure}[H]
    \centering
    \includegraphics[width=0.75\linewidth]{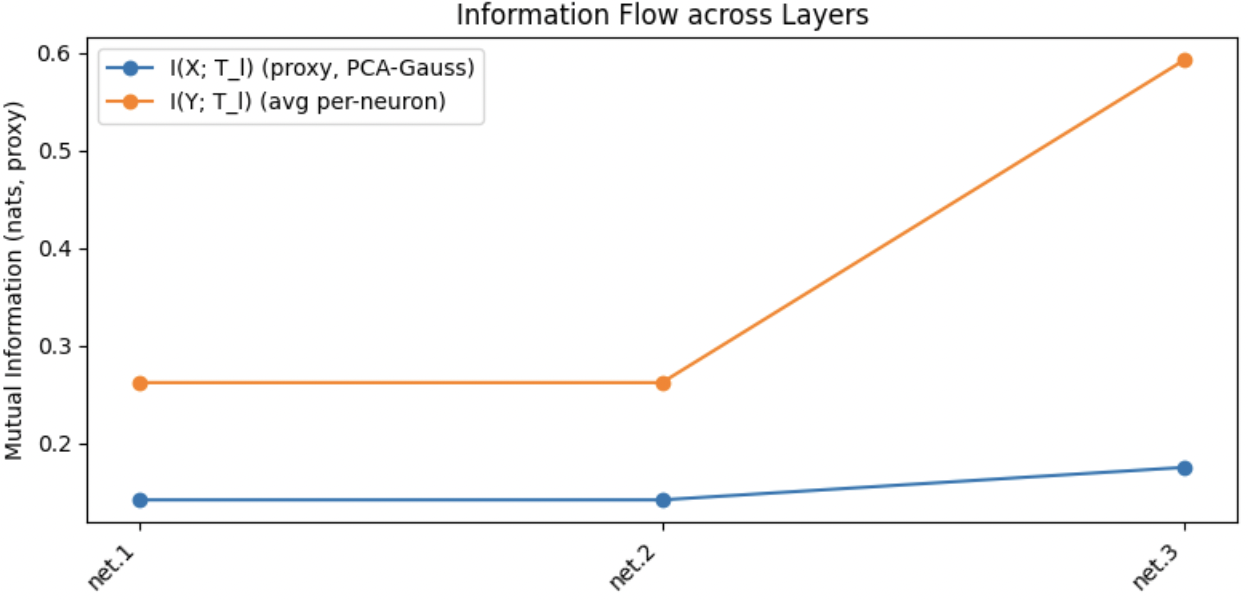}
    \caption{Information flow across layers for the MLP.}
\end{figure}

For the CNN, early layers maintain high estimated input preservation and low estimated task-label accessibility, while deeper layers exhibit stronger compression of input structure and a larger proxy value $\widehat I(Y;T_l)$. This reflects the reorganization of features into a form more useful for classification, not the creation of new information.

\begin{figure}[H]
    \includegraphics[width=0.75\linewidth]{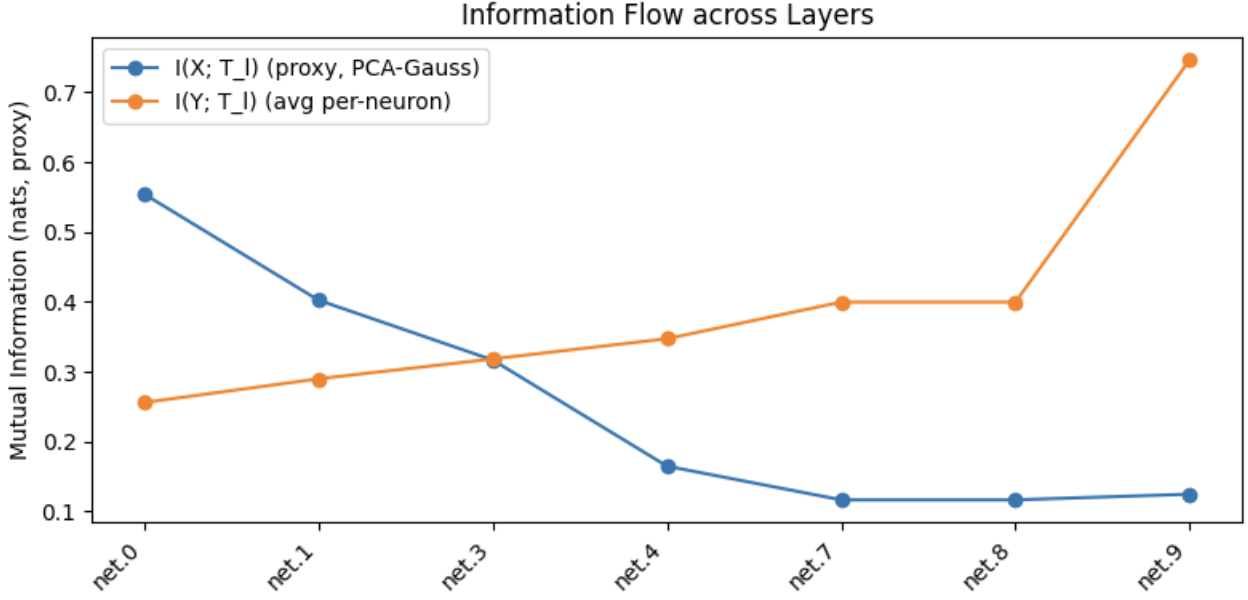}
    \caption{Information flow across layers for the CNN.}
    \label{fig:placeholder}
\end{figure}

Across all architectures, there is a consistent pattern:

\begin{itemize}
    \item Early layers preserve input information.
    \item Intermediate layers compress irrelevant features.
    \item Deeper layers retain task-relevant structure for classification.
\end{itemize}

These results suggest that neural networks can reorganize representations such that task-relevant structure becomes more accessible to the chosen estimator or classifier. However, this shouldn't be interpreted as true mutual information increasing through the network; rather, the empirical proxies reveal changes in representation geometry and estimator accessibility. The broader point remains that the learned network does not enforce a comparison principle, contraction property, or stability guarantee under perturbations.

\subsection{Information Flow in Sequential Decision Models}

To examine whether similar behavior arises in sequential settings, a reinforcement learning environment is considered. The system is modeled as a Markov decision process (MDP), where states $X_t$, actions $A_t$, and rewards $R_t$ evolve over time. A Deep Q-Network approximates the action-value function:
\[
Q_\theta(X_t,a) \approx \mathbb{E}\!\Bigg[\sum_{k=0}^\infty \gamma^k R_{t+1+k} \,\Big|\, X_t, A_t = a\Bigg]
\]

Internal activations $T_{\cdot,l}(X_t)$ define representations across layers. The network must encode latent task variables $Y_i$ (e.g., position, velocity) necessary for decision-making.

Information flow is measured via:
\[
I(X; T_{\cdot,l}), \quad I(A; T_{\cdot,l}), \quad I(Y_i; T_{j,l})
\]

Segregation is captured by unit-level specialization:
\[
S_{j,l} = \max_i I(Y_i; T_{j,l}) - \frac{1}{k-1} \sum_{i' \neq \arg\max_i} I(Y_{i'}; T_{j,l})
\]

Integration is reflected in layers where representations simultaneously encode multiple latent variables or remain predictive of actions despite compression.

The CartPole-v1 environment provides a simple sequential setting. In this game, the environment state $X_t \in \mathbb{R}^4$ at time $t$ consists of four variables: cart position, cart velocity, pole angle, and pole angular velocity. At each time step, the agent chooses a discrete action $A_t \in \{0,1\}$ corresponding to applying a horizontal force either left or right.  The agent receives reward $r_t = 1$ for every time step the pole remains upright and the cart stays within bounds; likewise, the game ends when the pole falls or a maximum time horizon (500 steps) is reached. In this framework, $X$ corresponds to the raw state variables and the task-relevant “hidden variables” $Y_i$ are simple functions of these components (e.g.\ whether the pole is falling left or right, whether the cart is near the boundary), which must be implicitly encoded by the network in order to choose appropriate actions.A two-layer MLP policy is trained using REINFORCE. After training, activations are recorded across $\sim 5{,}500$ state-action pairs.

Figure~12 shows the training curve over 200 episodes. Returns are initially low, then improve as the policy discovers stabilizing behavior, though fluctuations remain due to the high variance of on-policy training. The purpose of this figure is not to optimize CartPole performance, but to confirm that the learned network reaches a regime where its internal representations can be meaningfully analyzed.

\begin{figure}[H]
        \centering
        \includegraphics[width=0.75\linewidth]{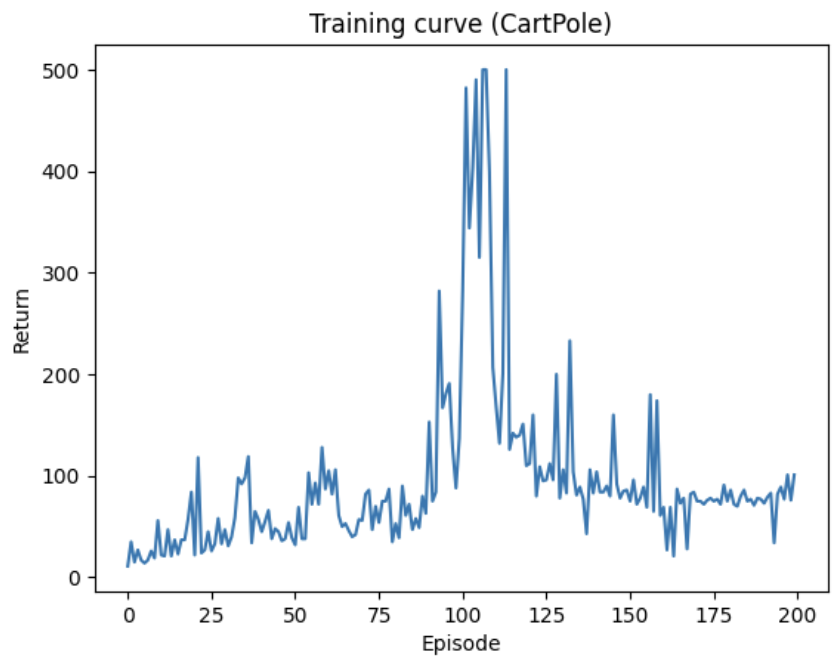}
        \caption{Training curve for the CartPole policy network}
    \label{fig:placeholder}
\end{figure}

To connect this control setting to the information-flow framework above, internal activations from the trained policy network are recorded and mutual information is estimated at each layer. After training, the learned policy is run in greedy mode (choosing $\arg\max_a \pi_\theta(a\mid x)$) for 50 evaluation episodes, yielding $N \approx 5{,}500$ state--action pairs. For each visited state $X_t$, the corresponding layer activations $T_{\ell}(X_t)$ are logged for all hidden and output layers, along with the realized actions $Y_t := A_t$.

Using the same information-flow methodology as in the feedforward setting, $I(X;T_\ell)$ is approximated by first computing a low-dimensional PCA representation $Z$ of the input states $X$. For each neuron in layer $\ell$, the squared multiple correlation $R^2$ between its activations and the top principal components of $X$ is then computed. Under a Gaussian approximation for the joint distribution of activations and inputs, mutual information admits the closed-form estimate
\[
\hat I(X;T_\ell) \approx -\tfrac{1}{2} \log(1-R^2)
\]
which is averaged across neurons in layer $\ell$. To approximate $I(Y;T_\ell)$, each neuron’s activations are discretized into quantile-based bins, and empirical mutual information with the discrete action labels $Y_t$ is computed:
\[
\hat I(Y;T_\ell^j)
=
\sum_{y,t} p(y,t)\log\frac{p(y,t)}{p(y)p(t)}
\]
again averaging over neurons $j$ in layer $\ell$. The resulting information profile for CartPole (Figure 13) exhibits a characteristic \emph{down--up--down--sharp-up} shape for both $\hat I(X;T_\ell)$ and $\hat I(Y;T_\ell)$. 

\begin{figure}[H]
        \centering
        \includegraphics[width=0.75\linewidth]{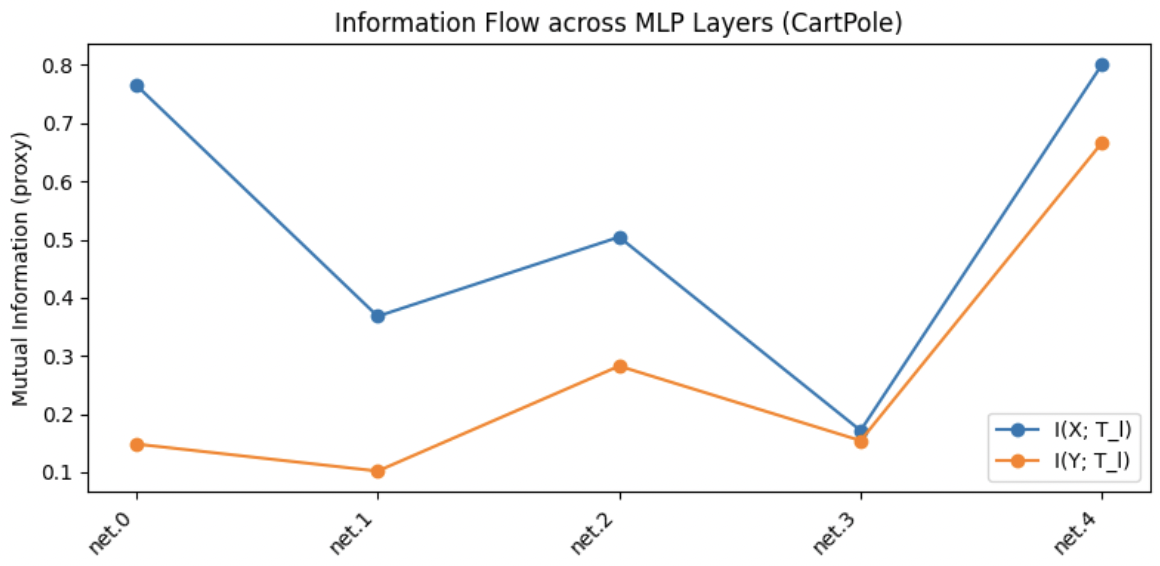}
        \caption{Information flow across layers of the CartPole policy network}
    \label{fig:placeholder}
\end{figure}

\smallskip
\noindent\emph{(i) Input-adjacent layer (net.0):}
$\hat I(X;T_\ell)$ is high while $\hat I(Y;T_\ell)$ is low. This layer preserves a rich representation of the physical game state (pole angle, cart velocity, cart position, angular velocity), but carries essentially no information about the action $Y$. At this stage, the network has not yet determined which state dimensions will be relevant for behavioral choice.

\smallskip
\noindent\emph{(ii) Early hidden layer (net.1):}
Both information measures dip. Here, the network begins discarding task-irrelevant components of $X$. Since the model is still evaluating the information, the policy still carries little information about $Y$.

\smallskip
\noindent\emph{(iii) Mid-level layer (net.2):}
Both estimated curves rise at this stage. Here, the network has begun to segregate the dimensions of $X$ that are relevant and organizes them into a form that is more predictive of the action under the chosen estimator. As a result, $\widehat I(X;T_\ell)$ and $\widehat I(Y;T_\ell)$ increase as empirical proxy values, reflecting increased estimator accessibility rather than an increase in exact mutual information. This corresponds to a \emph{feature integration} stage. Informationally, the representation transitions from just reflecting the physical state to shaping that state into a task-relevant signal for action selection; in more intuitive terms, the model moves from passively viewing the game to actively noticing which features matter (e.g., whether the pole is falling left or right) and integrating these observations into its eventual choice of action.

\smallskip
\noindent\emph{(iv) Bottleneck layer (net.3):}
A second dip appears. Here, $\hat I(X;T_\ell)$ decreases as the network removes information it now considers irrelevant from the previous integration stage, retaining only what is useful for distinguishing left vs.\ right corrections. Because the representation becomes more compact, $\hat I(Y;T_\ell)$ also dips slightly, reflecting that the action-relevant signal has not yet been fully committed to a discrete choice. This stage resembles a \emph{decision bottleneck}: the representation is small, efficient, and nearly sufficient for action selection, but the final commitment has not yet occurred.

\smallskip
\noindent\emph{(v) Output layer (net.4):}
Both empirical proxy values $\widehat I(X;T_\ell)$ and $\widehat I(Y;T_\ell)$ jump sharply at the output layer. At this stage, the network is no longer primarily filtering or compressing the representation; it is committing to an action. The rise in $\widehat I(Y;T_\ell)$ should be interpreted as the output representation becoming more directly aligned with the action labels used by the estimator. It does not imply that true mutual information has increased through the network.

\smallskip
Taken together, these experiments demonstrate that neural networks can reorganize representations so that task-relevant structure becomes more visible to empirical estimators and downstream classifiers. However, because the measured increases are proxy-based, they should not be interpreted as increases in exact mutual information. The relevant conclusion is instead that learned representations evolve without an explicit stability mechanism such as contraction or order preservation, thus motivating the introduction of the hybrid reconstruction operator in the main text, where nonlinear diffusion provides a mathematically controlled evolution that suppresses perturbations while preserving structure. In this regard, the empirical observations here serve as a baseline against which the stability properties of the proposed framework can be understood.

\section{Instability of Learned Reconstructions}

The purpose of this appendix is to explicitly connect the statistical variability observed in stochastic optimization (Appendix A) and the representation dynamics of neural networks (Appendix B) to the central modeling choice of this work: the introduction of a diffusion-based operator to stabilize learned reconstructions.

\subsection{Variability in Learned Parameters and Outputs}

As shown in Appendix A, SGD produces a sequence of iterates $(x_1, x_2, \dots)$ that converge only in distribution to a limiting parameter $x^*$, with asymptotic fluctuations governed by a covariance matrix. Even when averaged estimators converge, finite-sample realizations exhibit variability across runs and training epochs.

Let $G_{\theta_k}$ denote the learned reconstruction operator at epoch $k$, with parameters $\theta_k$ obtained via stochastic optimization. Then the reconstruction output
\[
u_k = G_{\theta_k}(u_{\text{in}})
\]
is itself a random variable induced by optimization noise.

Thus, even for fixed input $u_{\text{in}}$, the mapping
\[
k \mapsto G_{\theta_k}(u_{\text{in}})
\]
isn't constant and may exhibit significant variation across training checkpoints.

\subsection{Lack of Structural Control in Representation Learning}

Appendix B demonstrates that neural network representations evolve across layers without satisfying any comparison principle or contraction property. In particular, there is no guarantee that small perturbations in the input or parameters lead to controlled changes in the output. There is no general bound
\[
\|G_{\theta_k}(u_0) - G_{\theta_k}(v_0)\|_\infty \leq C \|u_0 - v_0\|_\infty
\]
with a uniform constant $C$ independent of $k$, nor any guarantee that
\[
\|G_{\theta_k}(u_{\text{in}}) - G_{\theta_{k'}}(u_{\text{in}})\|_\infty
\]
is small for nearby checkpoints $k, k'$.

This lack of control is intrinsic to learned mappings trained via stochastic optimization and is not addressed by standard architectures.

\subsection{Consequence for Reconstruction Pipelines}

\begin{itemize}
    \item Stochastic training induces variability in model parameters
    \item this variability propagates through the learned operator
    \item the learned operator provides no structural mechanism to suppress this variability
\end{itemize}

As a result, downstream quantities (such as AUC) may fluctuate across epochs even when the underlying task is unchanged.

\subsection{Operator-Level Stabilization via Nonlinear Diffusion}

The hybrid reconstruction operator
\[
H_t(u_0) = T_t(G_{\theta}(u_0))
\]
introduces a second stage $T_t$ with known structural properties at the continuous level:
\begin{itemize}
    \item order preservation
    \item comparison principle
    \item non-expansiveness in $L^\infty$
\end{itemize}

While the discrete implementation used in this work does not establish these properties rigorously, it is constructed as a finite approximation of such an evolution.

This leads to the following interpretation:
\begin{itemize}
    \item variability enters through $G_\theta$,
    \item the diffusion operator $T_t$ acts as a post-processing map,
    \item this map suppresses high-frequency perturbations and enforces a controlled evolution.
\end{itemize}

Thus, the role of the diffusion step is not to improve representation learning, but to impose stability at the level of the operator.

\subsection{Connection to Empirical Observations}

The reduction in epoch-to-epoch variability observed is consistent with this interpretation. While a formal bound on perturbation amplification is not established, the results align with the hypothesis that operator-level structure can mitigate instability arising from stochastic training.

\section{Numerical Implementation of the Diffusion Step}
\begin{lstlisting}[language=Python]
import numpy as np

def gradient(u):
    """Forward finite differences with Neumann boundary conditions"""
    ux = np.zeros_like(u)
    uy = np.zeros_like(u)
    
    ux[:, :-1] = u[:, 1:] - u[:, :-1]
    uy[:-1, :] = u[1:, :] - u[:-1, :]
    
    return ux, uy

def divergence(px, py):
    """Backward finite differences (discrete divergence)"""
    div = np.zeros_like(px)
    
    div[:, :-1] -= px[:, :-1]
    div[:, 1:]  += px[:, :-1]
    
    div[:-1, :] -= py[:-1, :]
    div[1:, :]  += py[:-1, :]
    
    return div

def diffusivity(grad_mag, kappa):
    """Edge-stopping function: 1 / (1 + (s/kappa)^2)"""
    return 1.0 / (1.0 + (grad_mag / kappa)**2)

def diffusion_step(u, dt, kappa):
    ux, uy = gradient(u)
    grad_mag = np.sqrt(ux**2 + uy**2)
    
    g = diffusivity(grad_mag, kappa)
    
    px = g * ux
    py = g * uy
    
    return u + dt * divergence(px, py)

def refine_slice(u0, K, dt, kappa):
    """Apply K Euler steps to one 2D slice"""
    u = u0.astype(np.float32, copy=True)
    for _ in range(K):
        u = diffusion_step(u, dt, kappa)
    return u

def refine_volume(u0, K, dt, kappa):
    """Apply diffusion slice-wise to 2D/3D input"""
    arr = np.asarray(u0)
    
    if arr.ndim == 2:
        return refine_slice(arr, K, dt, kappa)
    
    if arr.ndim == 4 and arr.shape[-1] == 1:
        arr = arr[..., 0]
    
    if arr.ndim != 3:
        raise ValueError(f"Expected (H,W), (D,H,W), or (D,H,W,1), got {arr.shape}")
    
    D, H, W = arr.shape
    out = np.empty((D, H, W), dtype=np.float32)
    
    for d in range(D):
        out[d] = refine_slice(arr[d], K, dt, kappa)
    
    return out
\end{lstlisting}

\end{document}